\pgfplotsset{compat=1.18}
\pgfplotsset{compat=newest}
\titleformat*{\subparagraph}{\itshape}
\newtheorem{proposition}{Proposition}
\numberwithin{equation}{section}
\definecolor{babypink}{rgb}{0.96, 0.76, 0.76}
\definecolor{burntsienna}{rgb}{0.91, 0.45, 0.32}     % colors
\definecolor{crimson}{rgb}{0.86, 0.08, 0.24}
\definecolor{darkspringgreen}{rgb}{0.09, 0.45, 0.27}
\definecolor{highlighColor}{rgb}{0.91, 0.41, 0.17}  % orange
\definecolor{royalblue}{rgb}{0.25, 0.41, 0.88}
\definecolor{royalpurple}{rgb}{0.47, 0.32, 0.66}
\definecolor{ruddy}{rgb}{1.0, 0.0, 0.16}
\definecolor{deepcarrotorange}{rgb}{0.91, 0.41, 0.17}
\definecolor{darkspringgreen}{rgb}{0.09, 0.45, 0.27}
\definecolor{rebuttalColor}{rgb}{0.17,0.63,0.17}
\newcommand{\rebuttal}[1]{{#1}}
\def\rset{\mathbb{R}}
\def\rmd{\mathrm{d}}
\def\normpdf{\mathrm{N}}
\def\eqsp{\,}
\def\bigO{\mathcal{O}}
\def\Id{\mathrm{I}}
\def\zero{0}
\def\pE{\mathbb{E}}
\def\eqdef{\vcentcolon=}
\def\wrt{w.r.t.}
\newcommand{\tbar}{\mathrel{\raisebox{0.15ex}{$\scriptscriptstyle\mid$}}}
\def\law{\mathsf{Law}}
\def\algo{{\sc{DInG}}}
\def\Tf{1}
\def\mask{\mathbf{m}}
\def\unmask{{\overline{\mathbf{m}}}}
\newcommand{\hilight}[1]{\textcolor{ruddy}{#1}}
\newcommand{\scdhilight}[1]{\textcolor{darkspringgreen}{#1}}
\newcommand{\tk}[1]{{t_{#1}}}
 \def\rset{{\mathbb{R}}}
 \def\rmd{\mathrm{d}}
 \def\eqsp{\;}
 \def\eqdef{\coloneqq}
\newcommandx\targ[4][4=]{
    \ifthenelse{\equal{#2}{}}{
        \ifthenelse{\equal{#3}{}}{
            p^{#4} _{#1}
        }{
            p^{#4} _{#1}(#3)
            }
    }{
        \ifthenelse{\equal{#3}{}}{
            p^{#4} _{#1}(\cdot|#2)
        }{
            p^{#4} _{#1}(#3|#2)
        }
    }}
\newcommandx\interp[4][4=\interpscale]{
    \ifthenelse{\equal{#2}{}}{
        \ifthenelse{\equal{#3}{}}{
            \pi^{#4} _{#1}
        }{
            \pi^{#4} _{#1}(#3)
            }
    }{
        \ifthenelse{\equal{#3}{}}{
            \pi^{#4} _{#1}(\cdot|#2)
        }{
            \pi^{#4} _{#1}(#3|#2)
        }
    }}
\newcommandx\pdata[4][4=]{
    \ifthenelse{\equal{#2}{}}{
        \ifthenelse{\equal{#3}{}}{
            p^{#4} _{#1}
        }{
            p^{#4} _{#1}(#3)
        }
    }{
        \ifthenelse{\equal{#3}{}}{
            p^{#4} _{#1}(\cdot|#2)
        }{
            p^{#4} _{#1}(#3|#2)
        }
    }}
\newcommandx\hpdata[4][4=]{
    \ifthenelse{\equal{#2}{}}{
        \ifthenelse{\equal{#3}{}}{
            \hat{p}^{#4} _{#1}
        }{
            \hat{p}^{#4} _{#1}(#3)
        }
    }{
        \ifthenelse{\equal{#3}{}}{
            \hat{p}^{#4} _{#1}(\cdot|#2)
        }{
            \hat{p}^{#4} _{#1}(#3|#2)
        }
    }}
\newcommandx\revker[4][4=]{
\ifthenelse{\equal{#2}{}}{
    \ifthenelse{\equal{#3}{}}{
        r^{#4} _{#1}
    }{
        r^{#4} _{#1}(#3)
    }
}{
    \ifthenelse{\equal{#3}{}}{
        r^{#4} _{#1}(\cdot|#2)
    }{
        r^{#4} _{#1}(#3|#2)
    }
}}
\newcommandx\hatrevker[4][4=]{
    \ifthenelse{\equal{#2}{}}{
        \ifthenelse{\equal{#3}{}}{
            \hat{r}^{#4} _{#1}
        }{
            \hat{r}^{#4} _{#1}(#3)
        }
    }{
        \ifthenelse{\equal{#3}{}}{
            \hat{r}^{#4} _{#1}(\cdot|#2)
        }{
            \hat{r}^{#4} _{#1}(#3|#2)
        }
    }
}
\newcommandx\refreshker[4][4=]{
    \ifthenelse{\equal{#2}{}}{
        \ifthenelse{\equal{#3}{}}{
            m^{#4} _{#1}
        }{
            m^{#4} _{#1}(#3)
        }
    }{
        \ifthenelse{\equal{#3}{}}{
            m^{#4} _{#1}(\cdot|#2)
        }{
            m^{#4} _{#1}(#3|#2)
        }
    }}
\newcommand\jpdata[3]{
    \ifthenelse{\equal{#2}{}}{
        \ifthenelse{\equal{#3}{}}{
            \bar{p}_{#1}
        }{
            \bar{p}_{#1}(#3)
        }
    }{
        \ifthenelse{\equal{#3}{}}{
            \bar{p}_{#1}(\cdot|#2)
        }{
            \bar{p}_{#1}(#3|#2)
        }
    }}
\newcommandx\post[4][4=]{
    \ifthenelse{\equal{#2}{}}{
        \ifthenelse{\equal{#3}{}}{
            \pi^{#4} _{#1}
        }{
            \pi^{#4} _{#1}(#3)
        }
    }{
        \ifthenelse{\equal{#3}{}}{
            \pi^{#4} _{#1}(\cdot|#2)
        }{
            \pi^{#4} _{#1}(#3|#2)
        }
    }}
\newcommandx\hpost[4][4=]{
    \ifthenelse{\equal{#2}{}}{
        \ifthenelse{\equal{#3}{}}{
            \hat\pi^{#4} _{#1}
        }{
            \hat\pi^{#4} _{#1}(#3)
        }
    }{
        \ifthenelse{\equal{#3}{}}{
            \hat\pi^{#4} _{#1}(\cdot|#2)
        }{
            \hat\pi^{#4} _{#1}(#3|#2)
        }
    }}
  \newcommandx\pot[3][3=]{
        \ifthenelse{\equal{#3}{}}{
            \ell^{#3} _{#1}(\obs|#2)
        }{
            \ell^{#3} _{#1}(\obs|#2)
        }
    }
\newcommandx\hpot[4][4=]{
    \ifthenelse{\equal{#2}{}}{
        \ifthenelse{\equal{#3}{}}{
            \hat{\ell}^{#4} _{#1}
        }{
            \hat{\ell}^{#4} _{#1}(#3)
        }
    }{
        \ifthenelse{\equal{#3}{}}{
            \hat{\ell}^{#4} _{#1}(\cdot|#2)
        }{
            \hat{\ell}^{#4} _{#1}(#3|#2)
        }
    }}
\newcommandx\fw[4][4=]{
        \ifthenelse{\equal{#3}{}}{
            q^{#4} _{\smash{#1}}(\cdot|#2)
        }{
            q^{#4} _{\smash{#1}}(#3|#2)
        }
    }
\newcommandx\denoiser[5][4=, 5=0]{
    \ifthenelse{\equal{#2}{}}{
        \ifthenelse{\equal{#3}{}}{
            \hat\bx^{#4}_{#5}(\cdot, #1)
        }{
            \hat\bx^{#4} _{#5}(#3, #1)
        }
    }{
        \ifthenelse{\equal{#3}{}}{
            \hat\bx^{#4} _{#5}(\cdot, #1|#2)
        }{
            \hat\bx^{#4} _{#5}(#3, #1|#2)
        }
    }
}
\newcommandx\noisepred[4][4=]{
    \ifthenelse{\equal{#2}{}}{
        \ifthenelse{\equal{#3}{}}{
            \hat\bx^{#4}_{1}(\cdot, #1)
        }{
            \hat\bx^{#4} _{1}(#3, #1)
        }
    }{
        \ifthenelse{\equal{#3}{}}{
            \hat\bx^{#4} _{1}(\cdot, #1|#2)
        }{
            \hat\bx^{#4} _{1}(#3, #1|#2)
        }
    }
}
    \newcommandx\hdenoiser[4][4=]{
    \ifthenelse{\equal{#2}{}}{
        \ifthenelse{\equal{#3}{}}{
            \hat{D}^{#4}_{#1}
        }{
            \hat{D}^{#4} _{#1}(#3)
        }
    }{
        \ifthenelse{\equal{#3}{}}{
            \hat{D}^{#4} _{#1}(\cdot|#2)
        }{
            \hat{D}^{#4} _{#1}(#3|#2)
        }
    }}
\newcommandx\epspred[4][4=]{
    \ifthenelse{\equal{#2}{}}{
        \ifthenelse{\equal{#3}{}}{
            \varepsilon^{#4}_{#1}
        }{
            \varepsilon^{#4} _{#1}(#3)
        }
    }{
        \ifthenelse{\equal{#3}{}}{
            \varepsilon^{#4} _{#1}(\cdot|#2)
        }{
            \varepsilon^{#4} _{#1}(#3|#2)
        }
    }}
\newcommand\clf[3]{
    \ifthenelse{\equal{#2}{}}{
        g_{#1}(#3|\cdot)
    }{
        g _{#1}(#3|#2)
    }
}
\newcommand\cfgdist[3]{
    \ifthenelse{\equal{#2}{}}{
        p^w _{#1}(#3|\cdot)
    }{
        p^w _{#1}(#3|#2)
    }
}
\newcommand\cscore[3]{
    \ifthenelse{\equal{#2}{}}{
        \ifthenelse{\equal{#3}{}}{
            s _{#1}
        }{
            s _{#1}(#3)
        }
    }{
        \ifthenelse{\equal{#3}{}}{
            s _{#1}(\cdot|#2)
        }{
            s _{#1}(#3|#2)
        }
    }}
\def\gauss{\mathcal{N}}
\def\interpscale{\eta}
\def\std{\sigma}
\def\acp{\alpha}
\def\ddimstd{\eta}
\def\bx{\mathbf{x}}
\def\bv{\mathbf{v}}
\def\bz{\mathbf{z}}
\def\bmu{\bm{\mu}}
\def\bw{\mathbf{w}}
\def\blended{{\sc{Blended-Diff}}}
\def\dps{\mathsf{dps}}
\def\ding{\mathsf{ding}}
\def\ddnm{{\sc{DDNM}}}
\def\diffpir{{\sc{DiffPIR}}}
\def\flowdps{{\sc{FlowDPS}}}
\def\flowchef{{\sc{FlowChef}}}
\def\reddiff{{\sc{RedDiff}}}
\def\daps{{\sc{DAPS}}}
\def\resample{{\sc{ReSample}}}
\def\psld{{\sc{PSLD}}}
\def\pnpflow{{\sc{PnP-Flow}}}
\def\ffhq{{\texttt{FFHQ}}}
\def\div2k{{\texttt{DIV2K}}}
\def\PieBench{{\texttt{PIE-Bench}}}
\def\param{\theta}
\def\dimobs{{d_\obs}}
\def\dimx{{d}}
\def\obs{\mathbf{y}}
\newcommandx{\hpredx}[3][2=0,3=\param]{\smash{m^{#3} _{#2|#1}}}
\newcommandx{\predx}[2][2=0]{\smash{m _{#2|#1}}}
\newcommandx{\prednoise}[2][2=\param]{\smash{\epsilon^{#2} _{#1}}}
\newcommandx{\score}[2][2=\param]{s^{#2} _{#1}}
\def\encoder{\mathcal{E}}
\newcommand{\tcp}[1]{\textcolor{purple}{#1}}
\newcommand{\tcz}[1]{\textcolor{ruddy}{#1}}
\def\encoder{\mathsf{Enc}}
\newcommand\blfootnote[1]{%
  \begingroup
  \renewcommand\thefootnote{}\footnote{#1}%
  \addtocounter{footnote}{-1}%
  \endgroup
}
\newcounter{hypA}
\title{Efficient Zero-Shot Inpainting with Decoupled Diffusion Guidance}
\author{% 
Badr Moufad\textsuperscript{1,*} \,
Navid Bagheri Shouraki\textsuperscript{1,5,7} \,
Alain Oliviero Durmus\textsuperscript{1} \,
\\
{\bf{%
Thomas Hirtz\textsuperscript{6} \,
Eric Moulines\textsuperscript{3,4} \,
Jimmy Olsson\textsuperscript{8} \,
Yazid Janati\textsuperscript{2,3,*} \,
}}
\\
\\
\textsuperscript{1}CMAP, Ecole Polytechnique \,
\textsuperscript{2}Institute of Foundation Models \,
\textsuperscript{3}MBZUAI \,
\textsuperscript{4}EPITA \,
\\
\textsuperscript{5}Sorbonne University \,
\textsuperscript{6}Lagrange Mathematics and Computing Research Center \,
\\
\textsuperscript{7}EPITA Research Lab \,
\textsuperscript{8}KTH Royal Institute of Technology \,
}
\begin{document}
\maketitle

\begin{abstract}
Diffusion models have emerged as powerful priors for image editing tasks such as inpainting and local modification, where the objective is to generate realistic content that remains consistent with observed regions. In particular, zero-shot approaches that leverage a pretrained diffusion model, without any retraining, have been shown to achieve highly effective reconstructions. However, state-of-the-art zero-shot methods typically rely on a sequence of surrogate likelihood functions, whose scores are used as proxies for the ideal score. This procedure however requires vector-Jacobian products through the denoiser at every reverse step, introducing significant memory and runtime overhead. To address this issue, we propose a new likelihood surrogate that yields simple and efficient to sample Gaussian posterior transitions, sidestepping the  backpropagation through the denoiser network. Our extensive experiments show that our method achieves strong observation consistency compared with fine-tuned baselines and produces coherent, high-quality reconstructions, all while significantly reducing inference cost.\\
Code is available at \url{https://github.com/YazidJanati/ding}.
\end{abstract}
\blfootnote{*Authors contributed equally}
\blfootnote{\ Correspondence: $\{ \texttt{badr.moufad@polytechnique.edu} \}$, $\{ \texttt{yazid.janati@mbzuai.ac.ae} \}$}

% --- section
\section{Introduction}
\label{sec:intro}
 We focus on \emph{inpainting problems} in computer vision, which play a central role in applications ranging from photo restoration to content creation and interactive design. Given an image with prescribed missing pixels, the objective is to generate a semantically coherent completion while ensuring strict consistency with the observed region. The importance of this task has motivated extensive research, spanning both classical approaches and, more recently, generative modeling with diffusion models \citep{rombach2022high, esser2024scaling, batifol2025flux, wu2025qwen}.

To address this problem, two main diffusion-based approaches have been popularized.  
The first relies on training \emph{conditional diffusion models} tailored to a specific editing setup. These models directly approximate the conditional distribution of interest \citep{saharia2022palette, wang2023imagen, kawar2023imagic, huang2025diffusion} and take as side inputs additional information such as a mask, a text prompt, or reference pixels \citep{saharia2022palette, wang2023imagen, kawar2023imagic, huang2025diffusion}. An alternative approach, which has recently attracted growing attention, is \emph{zero-shot image editing}, requiring no extra training or fine-tuning. In this formulation, the task is cast as a Bayesian inverse problem: the pre-trained diffusion model serves as a prior, while a likelihood term enforces fidelity to the observations, and the resulting posterior distribution defines the reconstructions \citep{song2019generative, song2021score, kadkhodaie2020solving, kawar2022denoising, lugmayr2022repaint, avrahami2022blended, chung2023diffusion, mardani2024a, rout2024semantic}. Sampling from this posterior is achieved by approximating the score functions associated with the diffusion model adapted to this distribution.  This \emph{plug-and-play} paradigm has been investigated across a variety of inverse problems, from image restoration to scientific imaging, and has demonstrated strong editing performance without task-specific training.

While current zero-shot methods are appealing, they face a critical practical limitation. Implementations of strong zero-shot posterior sampling with diffusion priors typically rely on the twisting function proposed by \citet{ho2022video, chung2023diffusion, song2022pseudoinverse}, which corresponds to the  likelihood evaluated at the denoiser's output given the observation. Simulating the corresponding reverse diffusion process requires computing  gradients of the denoiser with respect to its input. This in turn entails repeated backpropagation through the denoiser network and costly vector–Jacobian product (VJP) evaluations. This makes such methods computationally demanding, memory intensive, and often slower than training a dedicated conditional model.

\paragraph{Contributions.} 
We propose a new \emph{VJP-free} framework for zero-shot inpainting with a pre-trained diffusion prior. Our key idea is to approximate the intractable  twisted posterior-sampling  transitions by a closed-form mixture distribution that can be sampled exactly, thereby eliminating the need for VJP evaluations and backpropagation through the denoiser. Concretely, we modify the twisting function of \citet{ho2022video,chung2023diffusion} so that it evaluates the denoiser at an independent draw from the pretrained transition. This decoupling breaks the dependency between the denoiser and the arguments of the transition density. As a result,
our method provides posterior transitions that can be sampled efficiently for zero-shot inpainting with latent diffusion models. We demonstrate through extensive experiments on Stable Diffusion (SD) 3.5 that our method, coined {\sc{Decoupled INpainting Guidance}} (\algo),  consistently outperforms state-of-the-art guidance methods under low NFE budgets. It achieves, across three benchmarks, the best trade-off between fidelity to the visible content and realism of the reconstructions, while being both faster and more memory-efficient than competing approaches. Remarkably, even without any task-specific fine-tuning, it outperforms an SD~3 model that has been  fine-tuned for image editing, confirming the effectiveness and practicality of our framework.

\begin{figure}[t]
    \centering
    \includegraphics[width=\textwidth]{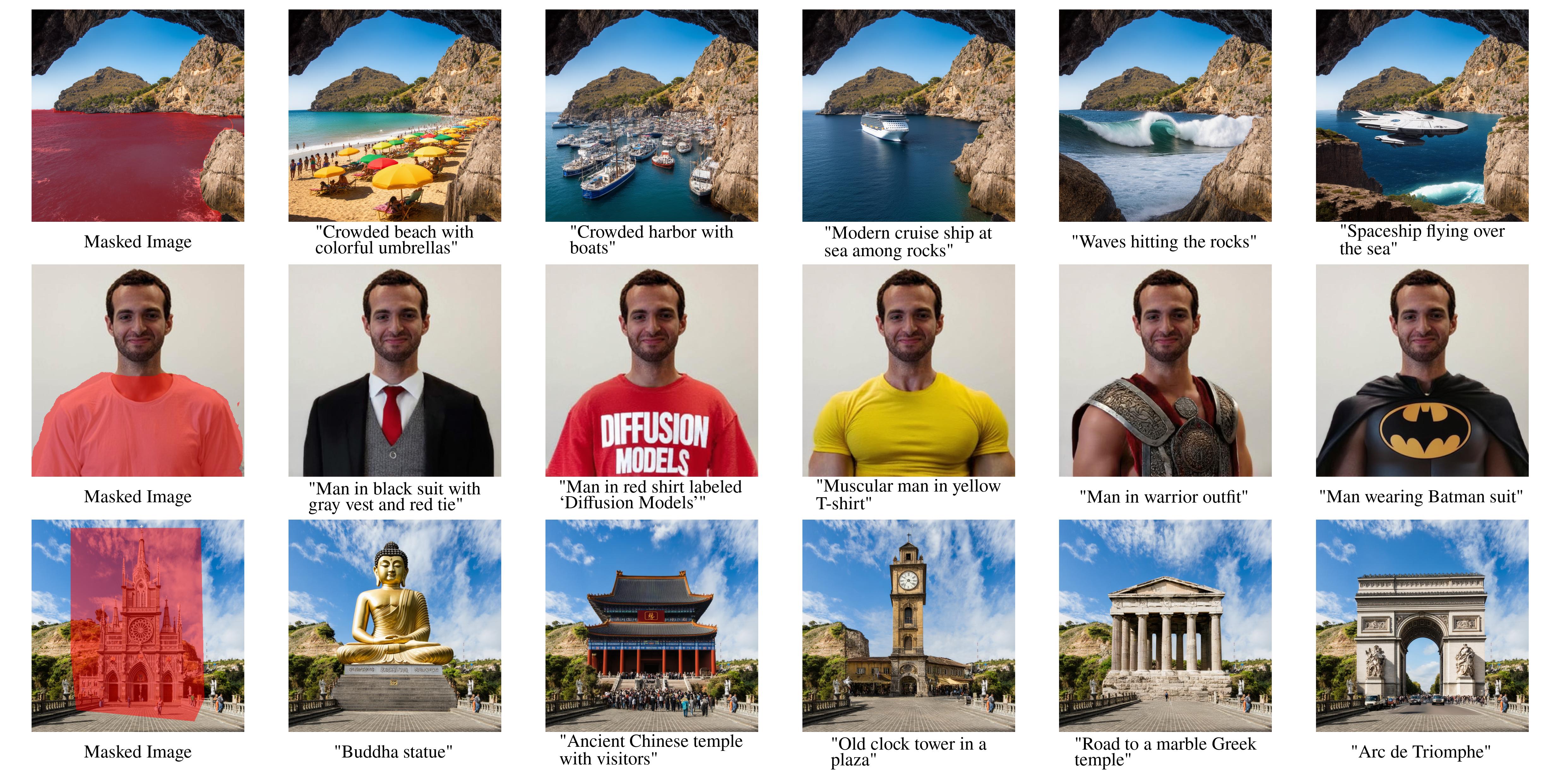}
    \captionsetup{font=small}
    \caption{Zero-shot inpainting edits generated by \algo\ (50 NFEs) for different masking patterns using Stable Diffusion 3.5 (medium). Given masked inputs (left column), the model fills the missing regions according to diverse textual prompts.}
    \label{fig:Image editing DIV2K}
\end{figure}

\section{Background}
\paragraph{Diffusion models}
% \label{subsec:diffusion-models}
Denoising diffusion models (DDMs) \citep{sohl2015deep, song2019generative, ho2020denoising} define a generative process for a data distribution $\pdata{0}{}{}$ by constructing a continuous path $(\pdata{t}{}{})_{t \in [0,1]}$ of  distributions between $p_0$ and 
$p_1 \eqdef \gauss(\zero, \Id_\dimx)$. More precisely, $\pdata{t}{}{} = \law(X_t)$, where
\begin{equation}
\label{eq:interpolation}
X_t = \alpha_t X_0 + \sigma_t X_1 \eqsp, 
\quad X_0 \sim \pdata{0}{}{} \eqsp, 
\quad X_1 \sim \pdata{1}{}{} \eqsp.
\end{equation}
Here $X_0$ and $X_1$ are supposed to be independent and $(\alpha_t)_{t \in [0,1]}$ and $(\sigma_t)_{t \in [0,1]}$ are deterministic, non-increasing and non-decreasing, respectively, schedules with boundary conditions $(\alpha_0,\sigma_0) \eqdef (1,0)$ and $(\alpha_1,\sigma_1) \eqdef (0,1)$. Typical choices include the \emph{variance-preserving schedule}, satisfying $\alpha_t^2 + \sigma_t^2 = 1$ \citep{ho2020denoising, dhariwal2021diffusion}, and the \emph{linear schedule}, defined by $(\alpha_t,\sigma_t) = (1-t, t)$ \citep{lipman2023flow, esser2024scaling, gao2024diffusion}. The path $(\pdata{t}{}{})_{t \in [0,1]}$ defines an interpolation that gradually transforms the clean data distribution $\pdata{0}{}{}$ into the Gaussian reference distribution $\pdata{1}{}{}$. 
To generate new samples, DDMs simulate a time-reversed Markov chain. Given a decreasing sequence $(\tk{k})_{k=0}^K$ of time steps with $\tk{K}=1$ and $\tk{0}=0$, reverse transitions are iteratively applied to map a sample from $\pdata{\tk{k+1}}{}{}$ to one from $\pdata{\tk{k}}{}{}$, thereby progressively denoising until convergence to the clean distribution $\pdata{0}{}{}$.

The DDIM framework \citep{song2021ddim} introduces a general family of reverse transitions for denoising diffusion models. It defines a new schedule $(\eta_t)_{t \in [0,1]}$, satisfying $\eta_t \leq \sigma_t$ for all $t \in [0,1]$, along with a family of transition densities given for $s < t$ by
\begin{equation}
\label{eq:ddpm-reverse}
 \pdata{s|t}{\bx_{t}}{\bx_s}[\ddimstd]
 = \pE \Big[\, \fw{s\tbar 0, 1}{X_0, X_1}{\bx_s}[\ddimstd] \,\Big|\, X_t = \bx_t \Big] \eqsp,
\end{equation}
where $\fw{s\tbar 0, 1}{\bx_0, \bx_1}{\bx_s}[\ddimstd] 
\eqdef \normpdf(\bx_s;\, \alpha_s \bx_0 + \sqrt{\sigma_s^2 - \eta_s^2}\, \bx_1,\, \eta_s^2 \Id)$ and the random variables $(X_0, X_t, X_1)$ are defined as in \eqref{eq:interpolation}.   By construction, this family satisfies the marginalization property 
$
\smash{\pdata{s}{}{\bx_s}
= \int \pdata{s\tbar t}{\bx_t}{\bx_s}[\ddimstd] \, \pdata{t}{}{\bx_t}\, \mathrm d \bx_t}
$ \cite[Appendix B]{song2021ddim}. 
Thus, $(p_{t_k\tbar t_{k + 1}}^\eta)_{k = 0}^{K-1}$ defines a consistent set of reverse transitions, enabling stepwise sampling from the sequence $(p_{t_k})_{k=0}^K$.  In practice, however, these transitions are intractable. A common approximation is to replace $X_0$ and $X_1$ in \eqref{eq:ddpm-reverse} by their conditional expectations \citep{ho2020denoising, song2021ddim}. More precisely, let $\denoiser{t}{}{}[\param]$ denote a parametric estimator of
$
\denoiser{t}{}{\bx_t} \eqdef \pE[X_0 \mid X_t = \bx_t]
$.
Since $\pE[X_1 | X_t = \bx_t] = (\bx_t - \acp_t \denoiser{t}{}{\bx_t}) / \std_t$, we set $\denoiser{t}{}{\bx_t}[\param][1] \eqdef (\bx_t - \acp_t \denoiser{t}{}{\bx_t}[\param][0]) / \std_t$. Then the parametric model proposed by \cite{ho2020denoising,song2021ddim} corresponds to approximating 
each $p_{t_k \tbar t_{k + 1}}^\eta$ by 
\begin{equation}
    \label{eq:ddpm-approximation}
\pdata{\tk{k} \tbar \tk{k+1}}{\bx_{\tk{k+1}}}{\bx_\tk{k}}[\ddimstd, \param] \eqdef \fw{\tk{k}\tbar 0,1}{\denoiser{\tk{k+1}}{}{\bx_\tk{k+1}}[\param], \denoiser{\tk{k+1}}{}{\bx_\tk{k+1}}[\param][1]}{\bx_\tk{k}}[\ddimstd]  \eqsp.
\end{equation}
For $k = 0$, $\pdata{{0}\tbar \tk{1}}{\bx_\tk{1}}{}[\ddimstd, \param]$ is simply defined as the Dirac mass at $\denoiser{\tk{1}}{}{\bx_\tk{1}}[\param]$. In the rest of the paper we omit the superscript $\eta$ when there is no ambiguity.
\paragraph{Image editing.}
% \label{sec:image-editing}
In this work, we address the task of image editing via inpainting. We assume access to some reference image $\bx_\ast \in \rset^\dimx$ that must be modified while remaining consistent with a prescribed set of observed pixels. Let 
%$\mask \subset [1:\dimx]$
$\mask \subset \{1,\ldots, \dimx\}$
denote the index set of missing (masked) pixels, and let 
%$\unmask = [1:\dimx] \setminus \mask$ 
$\unmask = \{1, \ldots, \dimx\} \setminus \mask$ 
be the index set of observed (unmasked) pixels, with cardinality $|\unmask| = \dimobs$.  
For any $\bx \in \rset^\dimx$ and 
%$\mathbf{i} \subset [1:\dimx]$
$\mathbf{i} \subset \{1, \ldots, \dimx\}$, we denote by $\bx[\mathbf{i}] \in \rset^{|\mathbf{i}|}$ the subvector formed by the components of $\bx$ with indices $\mathbf{i}$. The observation is thus given by $\obs \eqdef \bx_\ast[\unmask]$, and the objective is to synthesize a reconstruction $\hat\bx$ such that
$\hat\bx[\unmask] \approx \obs$ while generating the missing region $\hat\bx[\mask]$ in a realistic and semantically coherent manner with respect to the observed pixels. In the Bayesian formulation, the data distribution $p_0$ serves as a prior over natural images, while the observation model is encoded by a Gaussian likelihood on the observed coordinates:
\begin{equation}
 \label{eq:inpainting-likelihood}
 \pot{0}{\bx} = \normpdf\!\left(\obs ; \bx[\unmask], \, \std_\obs^2 \, \Id_{\dimobs}\right).
\end{equation}
The parameter $\std_\obs > 0$ serves as a relaxation factor: smaller values enforce strict adherence to the observation, while larger values permit controlled deviations from $\bx_\ast$, thereby facilitating the reconstruction process. In this Bayesian framework, the target distribution from which we aim to sample is the \emph{posterior distribution}
\begin{equation}
 \label{eq:posterior}
 \post{0}{\obs}{\bx_0} \propto \pot{0}{\bx_0}\,\pdata{0}{}{\bx_0} \eqsp.
\end{equation}
\paragraph{Inference-time guidance.}
% \label{subsec:background-guidance}
As observed in the seminal works of \citet{song2019generative,kadkhodaie2020solving,song2021score,kawar2021snips}, approximate sampling from the posterior distribution can be performed by biasing the denoising process with guidance terms, without requiring any additional fine-tuning.
The central idea is to modify the sampling dynamics of diffusion models on-the-fly so that the generated samples both satisfy the likelihood constraint $\pot{0}{\cdot}$ and remain plausible under the prior $\pdata{0}{}{}$. More precisely, a standard approach is to approximate the iterative updates of a diffusion model defined to target the posterior $\post{0}{\obs}{\cdot}$.
This in turn entails deriving an approximation of the posterior denoiser $\denoiser{t}{\obs}{\bx_t} \eqdef \int \bx_0 \, \post{0\tbar t}{\bx_t, \obs}{\bx_0} \, \rmd \bx_0$, where $\post{0\tbar t}{\bx_t, \obs}{\bx_0} \propto \post{0}{\obs}{\bx_0} \normpdf(\bx_t; \acp_t \bx_0, \std^2 _t \Id)$. The denoiser $\denoiser{t}{\obs}{}$ is related to the prior denoiser via the identity 
\begin{equation} 
    \label{eq:posterior-denoiser}
    \denoiser{t}{\obs}{\bx_t} = \denoiser{t}{}{\bx_t} + \acp^{-1} _t \std^2 _t \nabla_{\bx_t} \log \pot{t}{\bx_t} \eqsp,
\end{equation} 
where the additional term is referred to as the \emph{guidance term}; see \citet[Eq. 2.15 and 2.17]{daras2024survey}.  It is defined as the  logarithmic gradient of the \emph{propagated likelihood}
\begin{equation}
\label{eq:guidance-term}
\pot{t}{\bx_t} 
\eqdef \int \pot{0}{\bx_0} \, \pdata{0\tbar t}{\bx_t}{\bx_0}\, \rmd \bx_0 \eqsp, \text{ with } \, \pdata{0\tbar t}{\bx_t}{\bx_0} \propto \pdata{0}{}{\bx_0}\,\normpdf(\bx_t;\, \alpha_t \bx_0,\, \sigma_t^2 \Id) \eqsp;
\end{equation}
see \citet[Equation 2.20]{daras2024survey}.    
Since the pre-trained parametric approximation $\denoiser{t}{}{}[\param]$ of the prior denoiser $\denoiser{t}{}{}$ is already available, estimating $\denoiser{t}{\obs}{}$ reduces to computing the intractable score term $\nabla_{\bx_t} \log \pot{t}{\bx_t}$. A widely adopted approximation \citep{ho2022video, chung2023diffusion} replaces $\pdata{0\tbar t}{\bx_t}{\cdot}$ in \eqref{eq:guidance-term} by a Dirac mass at the denoiser estimate $\denoiser{t}{}{\bx_t}[\param]$, yielding
\begin{equation}
\label{eq:dps}
\hpot{t}{\bx_t}{\obs}[\param] 
\eqdef \pot{0}{\denoiser{t}{}{\bx_t}[\param]} \eqsp.
\end{equation}
This approximation is often combined with a suitable rescaling weight (possibly depending on $\bx_t$); see \citet[Equation 8]{ho2022video} and \citet[Algorithm 1]{chung2023diffusion}. Substituting this into the identity \eqref{eq:posterior-denoiser} yields an approximation of the posterior denoiser, which in turn defines an approximate diffusion model for $\post{0}{\obs}{}$.
% as discussed in \Cref{subsec:diffusion-models}.

\section{Method}
The methods discussed in the previous section rely on the likelihood approximation \eqref{eq:dps}, which is then inserted into \eqref{eq:posterior-denoiser}. However, computing this term requires differentiating through the denoisers $\denoiser{t_k}{}{}[\param]$ at each timestep $t_k$. This operation is computationally demanding: it increases memory usage, slows down the sampling process, and reduces scalability.  By contrast, fine-tuned conditional diffusion models bypass these inference costs once training is complete, but at the expense of per-task retraining. This highlights a fundamental trade-off: zero-shot posterior sampling eliminates the need for retraining, but incurs substantial overhead during inference. 
Our goal is to bridge this gap by designing a zero-shot posterior sampler that removes the need for backpropagation through the denoiser while preserving the effectiveness of guidance.
\begin{algorithm}[t]
    \caption{Posterior sampling with decoupled guidance}
    \begin{algorithmic}[1]
       \STATE {\bfseries Input:} decreasing timesteps $(t_k)_{k=K}^0$ with $t_K = 1$, $t_0 = 0$; original image $\bx_\ast$; mask $\mask$; \\
       DDIM parameters $(\eta_k)_{k = K} ^0$.
       \vspace{.1cm}\\
    %    \STATE $\masklat \gets \texttt{downsample}(\mask)$
    %    \STATE $\obs \gets \encoder(\bx_\ast)[\unmasklat]$
       \STATE $\obs \gets \bx_\ast[\unmask]; \quad \bx \sim \gauss(\zero, \Id_\dimx)$
       \FOR{$k = K-1$ {\bfseries to} $1$}
            \STATE $\hat\bx_0 \gets \bx^\param _0(\bx, \tk{k+1})$
            \STATE $\hat\bx_1 \gets (\bx - \acp_\tk{k+1} \hat\bx_0) / \std_\tk{k+1}$
            \STATE $\bmu \gets \acp_\tk{k} \hat\bx_0 + (\std^2 _\tk{k} - \ddimstd^2 _{k})^{1/2} \hat\bx_1$ \\
            \vspace{.1cm}
            \textcolor{blue}{/* \texttt{Sampling \eqref{eq:gaussian-conjugacy}} */}
            \STATE $(\bw, \bw^\prime) \overset{\tiny{\mbox{i.i.d.}}}{\sim} \gauss(\zero_\dimx, \Id_\dimx)$\\
            \STATE $\tcz{\bz} \gets \bmu + \ddimstd _k \bw$
            \STATE $\hat\bx^{\mathrm{pxy}} _1 \gets (\tcz{\bz} - \acp_\tk{k} \denoiser{\tk{k}}{}{\tcz{\bz}}[\param]) / \std_\tk{k}$ \label{line:noise_proxy}
            \STATE $\gamma \gets  \eta^2 _\tk{k} / (\eta^2 _\tk{k} + \acp^2 _\tk{k} \std^2 _\obs)$
            \STATE $\bx[\mask] \gets \bmu[\mask] + \ddimstd _k \bw^\prime[\mask]$ \label{line:masked_update}
            \STATE $\bx[\unmask] \gets (1 - \gamma) \bmu[\unmask] + \gamma \big( \acp_\tk{k} \obs + \std_\tk{k} \hat\bx^{\mathrm{pxy}} _1[\unmask] \big) + \acp_\tk{k} \std _\obs \sqrt{\gamma}  \bw^\prime [\unmask]$ \label{line:unmasked_update}
       \ENDFOR
       \STATE {\bfseries Return:} $\denoiser{\tk{1}}{}{\bx}[\param]$
    \end{algorithmic}
    \label{algo:decoupled}
\end{algorithm}

\paragraph{Reverse transitions for the posterior.}  
Our method builds upon the alternative sampling strategy introduced in \cite{wu2023practical,zhang2023towards,janati2024dcps}. 
% We now describe an alternative sampling strategy to that presented in \Cref{subsec:background-guidance}, which will serve as the basis for our method. 
% \jimmy{There is a risk here that the reader may think that what follows in sentence two and onwards is part of our contribution, when in fact it is old news.} 
Instead of initializing the interpolation \eqref{eq:interpolation} with the prior $X_0 \sim \pdata{0}{}{}$, we consider the same process initialized from the posterior distribution $X_0 \sim \post{0}{\obs}{}$. This yields a new family of random variables whose marginals are
$\post{t}{\obs}{\bx_t} 
\eqdef \int \normpdf(\bx_t;\, \alpha_t \bx_0,\, \sigma_t^2 \Id_\dimx)\, \post{0}{\obs}{\bx_0}\, \rmd \bx_0$, 
in analogy with the prior family $(\pdata{t}{}{})_{t \in [0,1]}$. Moreover, the DDIM transitions associated with $(\post{t}{\obs}{})_{t \in [0,1]}$ are given by \citet[Equation~1.17]{janati2025bridging}:
\begin{equation}
\label{eq:posterior-transition}
\post{s\tbar t}{\bx_{t}, \obs}{\bx_s}[\ddimstd] 
\propto \pot{s}{\bx_s}\, \pdata{s|t}{\bx_{t}}{\bx_s}[\ddimstd] \eqsp,
\end{equation}
which defines a valid Markov chain with marginals $(\post{t_k}{\obs}{})_{k=1}^K$. This chain defines a  path between the Gaussian reference $\gauss(\zero,\Id_\dimx)$ and the posterior distribution $\post{0}{\obs}{}$. However, the presence of the likelihood term $\pot{t}{\bx_t}$ makes also these transitions intractable. To address this issue, prior works \citep{zhang2023towards, wu2023practical} introduced the surrogate transitions proportional to $\bx_s \mapsto \hpot{s}{\bx_s}{\obs}[\param] \pdata{s\tbar t}{\bx_{t}}{\bx_s}[\eta,\param]$, for fixed $\bx_t$ and $\obs$,
    % \begin{equation}
    % \label{eq:surrogate-transition}
    % \hpost{s\tbar t}{\bx_{t}, \obs}{\bx_s}[\param] \propto \hpot{s}{\bx_s}{\obs}[\param] \pdata{s\tbar t}{\bx_{t}}{\bx_s}[\eta,\param] \eqsp,
    % \end{equation}
where $\hpot{t}{\cdot}{\obs}[\param]$ are defined in \eqref{eq:dps}. These transitions  are then approximated using either variational inference \citep{janati2024dcps,pandey2025variational} or  
 sequential Monte Carlo methods \citep{wu2023practical}. However, similar to the methods described in the previous section, these approximations rely on the approximate guidance term and thus suffer from inflated memory usage and higher runtime. 
 
\paragraph{Our likelihood approximation.}  
To address this limitation, we draw inspiration from \eqref{eq:dps} to propose a lightweight approximation, designed to eliminate the need for VJP evaluations through the denoiser. Using the relation $\denoiser{s}{}{\bx_s}[\param][0]  =(\bx_s-\sigma_s \denoiser{s}{}{\bx_s}[\param][\Tf])/\alpha_s$, we first rewrite 
%We begin by rewriting 
the standard likelihood approximation \eqref{eq:dps} in terms of the noise prediction $\denoiser{s}{}{\bx_s}[\param][\Tf]$ according to 
\[
\hpot{s}{\bx_s}{\obs}[\param] 
= \pot{0}{(\bx_s - \sigma_s \denoiser{s}{}{\bx_s}[\param][\Tf])/\alpha_s} %\eqsp, \quad \text{using} \quad \denoiser{s}{}{\bx_s}[\param][0]  =(\bx_s-\sigma_s \denoiser{s}{}{\bx_s}[\param][\Tf])/\alpha_s 
\eqsp.
\]
Based on this parametrization, we then introduce the following alternative approximation
\begin{equation}
\label{eq:approx-decoupled-potential}
\hpot{s}{\bx_s, \tcz{\bz_s}}{\obs}[\param] 
\eqdef \pot{0}{(\bx_s - \sigma_s \denoiser{s}{}{\tcz{\bz_s}}[\param][\Tf])/\alpha_s} \eqsp,
\end{equation}
where the noise predictor is evaluated at $\tcz{\bz_s} \in \rset^d$, which serves as a proxy for $\bx_s$. %As we will highlight, this decoupling 
A key feature of this decoupling is that it 
enables lightweight updates, avoids costly denoiser backpropagation, and still provides high-quality reconstructions.
% This way, computing the gradient \wrt\ $\bx_s$ no longer requires backpropagating through the input of the network $\denoiser{s}{}{}[\param][\Tf]$. 
Then, similarly to \eqref{eq:posterior-transition}, we define %replacing \eqref{eq:dps} with \eqref{eq:approx-decoupled-potential}:
\[
\hpost{s\tbar t}{\tcz{\bz_s}, \bx_t, \obs}{\bx_s}[\param] 
\propto \hpot{s}{\bx_s, \tcz{\bz_s}}{\obs}[\param] \; \pdata{s\tbar t}{\bx_t}{\bx_s}[\eta,\param] \eqsp.
\]
This leads us to propose the surrogate\footnote{In the follow-up work \citet{ghorbel2026ding-editor}, we provide further insight into this approximation.}
\begin{equation}
\label{eq:algo-transition}
\hpost{s\tbar t}{\bx_t, \obs}{\bx_s}[\param] 
\;\eqdef\; \pE \left[ \hpost{s\tbar t}{Z_s, \bx_t, \obs}{\bx_s}[\param]  \right] \eqsp,
\end{equation}
where $Z_s \sim \pdata{s\tbar t}{\bx_t}{}[\param]$, for \eqref{eq:posterior-transition}. The transition $\hpost{s\tbar t}{\bx_t, \obs}{\bx_s}[\param]$ generally lacks a closed-form expression; nevertheless, since it has a \emph{mixture structure}, it allows for straightforward and efficient sampling. Sampling from $\hpost{s\tbar t}{\bx_t, \obs}{\bx_s}[\param]$ can be performed by first drawing $Z_s$ from $\pdata{s\tbar t}{\bx_t}{}[\param]$, and then sampling from $\hpost{s\tbar t}{Z_s, \bx_t, \obs}{\bx_s}[\param]$. Moreover, as we will now show, in the case of inpainting, the second step can be carried out exactly.

Let $\bmu^\param_{s\tbar t}(\bx_t;\!\ddimstd)$ denote the mean of the Gaussian reverse transition $\pdata{s\tbar t}{\bx_t}{}[\ddimstd, \param]$. In the case of inpainting \eqref{eq:inpainting-likelihood}, standard Gaussian conjugacy results \cite[Equation~2.116]{bishop2006pattern} show that $\hpost{s\tbar t}{\bz_s, \bx_t, \obs}{}[\param]$ admits a closed-form Gaussian expression
\begin{multline}
\label{eq:gaussian-conjugacy}
\hpost{s\tbar t}{\tcz{\bz_s}, \bx_t, \obs}{\bx_s}[\param] 
= \normpdf%\!
\big(\bx_s[\mask]; \bmu^\param_{s\tbar t}(\bx_t;\!\ddimstd)[\mask], \, \ddimstd^2_s \Id_{\dimx - \dimobs}\big) \\
\times \normpdf%\!
\Big(\bx_s[\unmask]; (1 - \gamma_{s\tbar t}) \bmu^\param_{s\tbar t}(\bx_t;\!\ddimstd)[\unmask] 
+ \gamma_{s\tbar t}\big(\alpha_s \obs + \sigma_s \denoiser{s}{}{\tcz{\bz_s}}[\param][1][\unmask]\big), \, 
\alpha_s^2 \sigma_\obs^2 \gamma_{s\tbar t}\, \Id_{\dimobs}\Big) \eqsp,
\end{multline}
with
$
\gamma_{s\tbar t} \eqdef \eta_s^2 / (\eta_s^2 + \alpha_s^2 \sigma_\obs^2)
$. 
A derivation is provided in \Cref{sec:posterior-derivation}. 
%Consequently, sampling from \eqref{eq:algo-transition} is exact: one first draws $Z_s \sim \pdata{s\tbar t}{\bx_t}{}[\ddimstd, \param]$ and then samples from $\hpost{s\tbar t}{Z_s, \bx_t, \obs}{}[\param]$.
Thus, a sample $X_s$ from \eqref{eq:algo-transition} can be drawn exactly by, first, generating a realization $\bz_s$ of $\smash{Z_s \sim \pdata{s\tbar t}{\bx_t}{}[\ddimstd, \param]}$ and, second, sampling $X_s[\mask]$ and $X_s[\unmask]$ conditionally independently from the two Gaussian distributions in \eqref{eq:gaussian-conjugacy}; see \Cref{algo:decoupled} for a pseudocode of this approach, which we refer to as {\algo} (see \Cref{sec:intro}).  %({\sc{Decoupled INpainting Guidance}}). 
%which we refer to as {\sc{Decoupled INpainting Guidance}} (\algo). 

\paragraph{Practical implementation. }
A key practical feature of our method is that it depends on a single hyperparameter: the sequence $(\ddimstd_t)_{t \in [0,1]}$ of standard deviations, which controls the level of stochasticity in the DDIM reverse process. This choice is particularly critical in the low-NFE regime, where only a few function evaluations are available and the variance schedule strongly influences both observation fidelity and perceptual quality. In all experiments, we adopt the schedule $\ddimstd_t = \sigma_t (1 - \alpha_t)$.
An ablation study of this parameter is reported in \Cref{sec:experiments}.  

Beyond this hyperparameter, an important practical consideration is that most large-scale diffusion models for high-resolution image generation operate in a compressed latent space rather than in pixel space \citep{rombach2022high, esser2024scaling}. To apply our algorithm in this setting, we must therefore formulate the inpainting task in the latent space. Denote by $\encoder$ the encoder, $\mathbf{X}_\ast$ the pixel-space ground-truth image and $\textbf{M}$ the corresponding pixel-space mask. Following \citet{avrahami2022blended}, we set $\bx_\ast \eqdef \encoder(\mathbf{X}_\ast)$, the observation to $\obs \eqdef \bx_\ast[\mask]$ where $\mask$ is a downsampled version of the pixel-space mask $\mathbf{M}$. 
We illustrate in \Cref{fig:illustration-latent-masking} that masking in the latent spaces translates to masking in the pixel despite the nonlinearity of the decoder.
% \[
% \obs \;\eqdef\; \encoder(\bx_\ast)[\mask_{\tiny{\mbox{lat}}}] \eqsp,
% \]
% where and $\mask_{\tiny{\mbox{lat}}}$ is a downsampled version of the pixel-space mask $\mask$. 
Since the encoder reduces spatial resolution by a fixed factor (\emph{e.g.}, $s=8$ in \cite{esser2024scaling}), we construct the latent mask $\mask$ by average pooling the binary pixel-space mask $\mathbf{M}$ with kernel and stride $s$. Each latent site is assigned the fraction of unmasked pixels within its receptive field. These fractional values are then thresholded (typically at $0.5$) to produce a binary mask; in other words, a latent site is marked as observed if the majority of its underlying pixels are unmasked. In practice, the mask $\mask$ is provided as a single-channel image and broadcast across all latent channels when applied to $\bx_\ast$. Finally, we apply \Cref{algo:decoupled} with $(\bx_\ast, \obs, \mask)$ thus defined in the latent space.

\paragraph{Related methods.} Our work shares similarities with various recent approaches to zero-shot diffusion guidance, which now briefly review.  
The closest line of work comprises variants of the \emph{replacement method} \citep{song2019generative, song2021score}, which follows the same structure as \Cref{algo:decoupled}. In these schemes, the masked coordinates of the state are updated according to the standard DDIM transition (Line~\ref{line:masked_update}), while the unmasked coordinates are replaced by a direct update that enforces consistency with the observation $\obs$ (Line~\ref{line:unmasked_update}).   In its simplest form, the method performs ancestral sampling with the transition
\begin{equation}
\label{eq:replacement-method}
\hpost{s\tbar t}{\bx_t, \obs}{\bx_s}[\param] 
= \normpdf%\!
\big(\bx_s[\mask];\, \bmu^\param_{s\tbar t}(\bx_t;\!\ddimstd)[\mask],\, \eta_s^2 \Id_{\dimx - \dimobs}\big)\,
  \normpdf%\!
  \big(\bx_s[\unmask];\, \alpha_s \obs,\, \sigma_s^2 \Id_{\dimobs}\big) \eqsp,
\end{equation}
\emph{i.e.}, the unmasked state is set to a noisy version of the observation $\alpha_s \obs + \sigma_s W_s$, where $W_s \sim \gauss(\zero, \Id_{\dimobs})$; see \citet[Algorithm~2]{song2019generative} and \citet[Appendix~I.2]{song2021score}. \cite{avrahami2022blended} extended this approach to the latent space using a downsampled mask. The method was later refined in RePaint \citep{lugmayr2022repaint}, which improves sample quality by performing multiple back-and-forth updates: after applying the replacement step from $\tk{k+1}$ to $\tk{k}$, a forward noising step is applied from $\tk{k}$ back to $\tk{k+1}$, and this cycle is repeated several times.  Several works have combined the replacement method with sequential Monte Carlo (SMC) sampling \citep{trippe2023diffusion, cardoso2023monte, dou2024diffusion, corenflos2025conditioning, zhao2025generative}. In particular, \citet{cardoso2023monte} update the unmasked coordinates of each particle using a Gaussian transition whose mean is a convex combination of the DDIM mean and the rescaled observation $\alpha_{\tk{k}} \obs$. In the inpainting framework, the recently proposed PnP-Flow \citep{martin2025pnpflow} reduces to using similar transitions without relying on SMC, \emph{i.e.}, by using a single particle. 
We explicitly compare the update rules in \citet{cardoso2023monte,martin2025pnpflow} with ours in \Cref{subsec:related-works}, where we also discuss additional related work. 

\section{Experiments}
\label{sec:experiments}
In this section, we extensively evaluate the inpainting performance of \algo\
when used with different large-scale models.
We benchmark its performance on multiple datasets against several state-of-the-art baselines.
We further analyze the relevance of our modeling choices, specifically the formulation of the approximation in \eqref{eq:approx-decoupled-potential} and the schedule of DDIM standard deviations $(\ddimstd_t)_{t \in [0,1]}$, through a series of targeted ablations.

\paragraph{Models and datasets.}
We evaluate our method on Stable Diffusion~3.5 (medium) \citep{esser2024scaling}. We set the CFG scale to $2$.
Our experiments cover three datasets: \ffhq~\citep{karras2019ffhq}, \div2k~\citep{agustsson2017ntire}, and \PieBench~\citep{ju2024pnp-inversion}.
For \ffhq, we use the first 5k images and condition generation on the prompt \emph{``a high-quality photo of a face''}.
For \div2k, we include both training and validation splits (900 images in total), and generate captions for each image using BLIP-2 \citep{li2023blip}; see \Cref{sec:exp-details} for details.
All \ffhq\ and \div2k images are resized to a resolution of $768 \times 768$.
The \PieBench\ dataset contains $700$ images of resolution $512 \times 512$, each paired with an inpainting mask and an edit caption. After removing cases where the mask completely covers the image, the resulting evaluation set contains $556$ images.

\begin{figure}[t]
    \centering
    \vspace{-4mm}
    \includegraphics[width=0.8\textwidth]{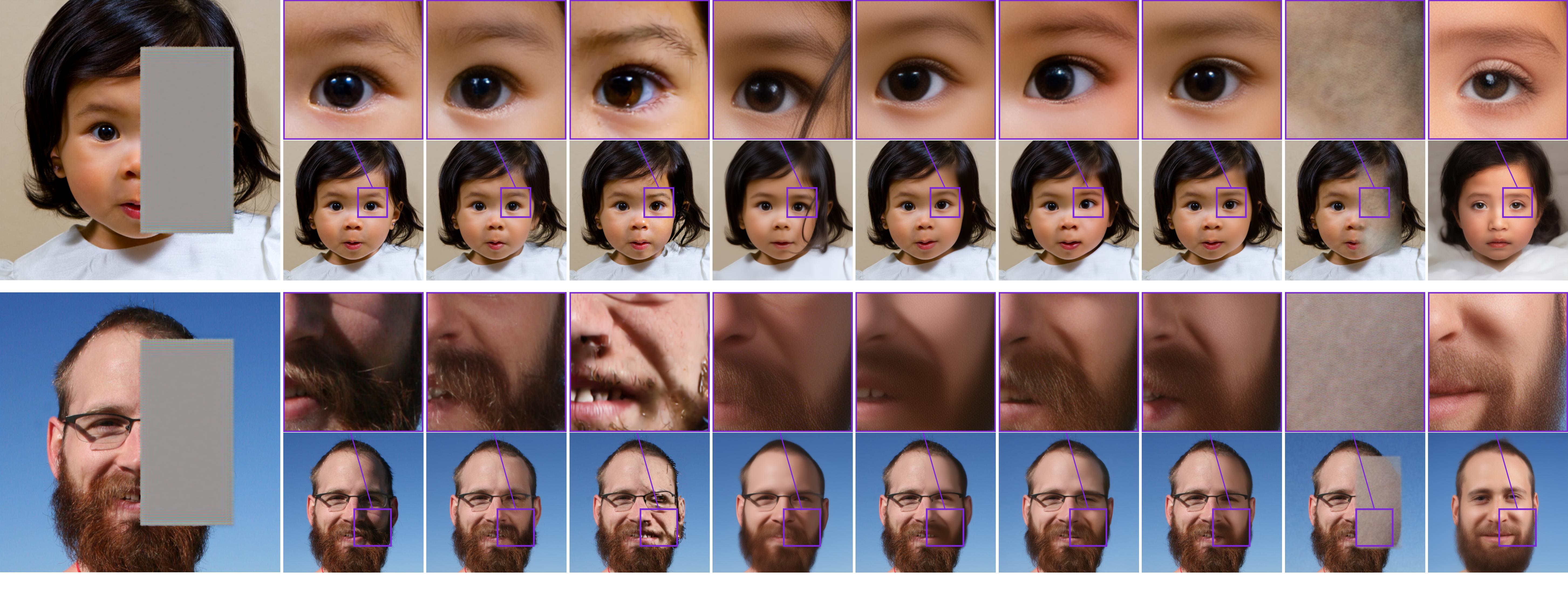}
    \includegraphics[width=0.8\textwidth]{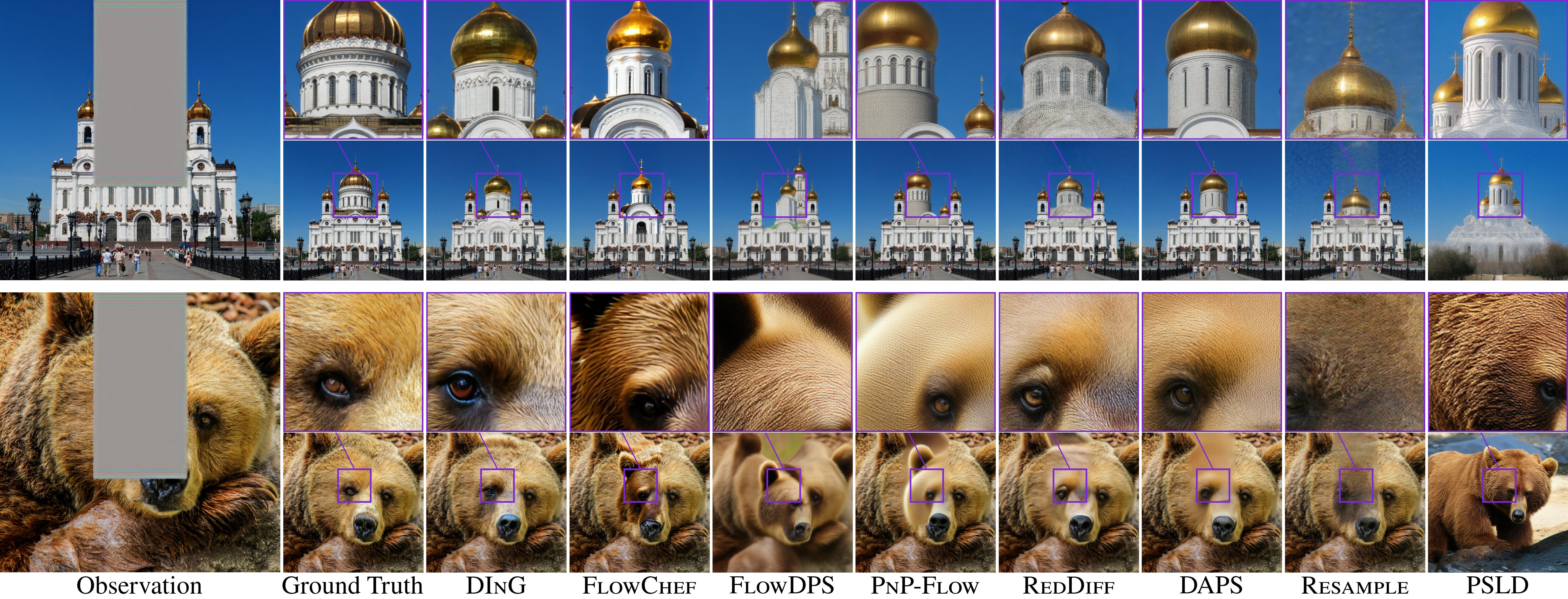}
    \captionsetup{font=small}
    \caption{Examples of reconstructions on \ffhq\ and \div2k with 50 NFEs.}
    \label{fig:DIV2K_samples}
    \vspace{-4mm}
\end{figure}

\paragraph{Evaluation and masks.}
For \ffhq\ and \div2k, we evaluate inpainting performance under four rectangular masking configurations: (i) right half of the image (\emph{Half}), (ii) upper half (\emph{Top}), (iii) lower half (\emph{Bottom}), and (iv) a central $512 \times 512$ square (\emph{Center}). In contrast, \PieBench\ provides irregular masks with diverse spatial patterns; see \Cref{sec:reconstructions} for examples. Unless otherwise stated, we set $\std_\obs = 0.01$ across all tasks. Since exact posterior sampling is infeasible, we assess inpainting quality using proxy metrics. To measure distributional alignment with the dataset, we report both FID and patch FID (pFID)~\citep{chai2022pfid}, the latter offering finer granularity for high-resolution evaluation. Following the standard FID protocol, we extract 10 random $256 \times 256$ patches per image, yielding a total of 50k patches. To quantify consistency with the observed content, we compute context PSNR (cPSNR), defined as the PSNR over the unmasked region only. We further report LPIPS~\citep{zhang2018lpips} relative to the ground truth to evaluate perceptual similarity, which is especially relevant for \ffhq\ where facial symmetries make reconstructions visually close to the reference. For \PieBench, which includes edit captions, we additionally report CLIP-Score~\citep{radford2021learning} on both the full image (CLIP) and the edited region (CLIP-ED), following \citet{ju2024pnp-inversion}. Together, these metrics provide a comprehensive evaluation of inpainting quality. While each captures a different aspect of performance, none should be interpreted in isolation.

\begin{wraptable}[11]{r}{0.3\textwidth}
    \centering
    \captionsetup{font=small}
    \vspace*{-4mm}
    \caption{Memory and runtime.}
    \vspace*{-2mm}
\resizebox{0.3\textwidth}{!}{
\begin{tabular}{lrr}
\toprule
& \multicolumn{2}{c}{\texttt{DIV2K} 768px} \\
\cmidrule(lr){2-3}
Method & Time (s) & Mem. (GB) \\
\midrule
% \rowcolor{rebuttalColor!30}
 \blended  & 3.0 & 22.09 \\
\daps & 9.1 & 22.09 \\
% \rowcolor{rebuttalColor!30}
 \ddnm & 3.1 & 22.09 \\
% \rowcolor{rebuttalColor!30}
 \diffpir & 3.1 & 22.09 \\
\flowchef & 3.0 & 22.09 \\
\flowdps & 3.0 & 22.10 \\
\pnpflow & 3.1 & 22.09 \\
\psld & 7.4 & 24.49 \\
\reddiff & 3.1 & 22.09 \\
\resample & 8.1 & 24.50 \\
\rowcolor{deepcarrotorange!60} \algo\ (ours) & 2.9 & 22.09 \\
\bottomrule
\end{tabular}
    }
\label{tab:runtime}
\end{wraptable}
\paragraph{Baselines.}  We compare against seven state-of-the-art baselines: \flowchef~\citep{patel2024flowchef}, \flowdps~\citep{kim2025flowdps}, \daps~\citep{zhang2025improving}, \reddiff~\citep{mardani2024a}, \resample~\citep{song2024solving}, \psld~\citep{rout2024solving}, \pnpflow~\citep{martin2025pnpflow}, \rebuttal{\diffpir~\citep{zhu2023denoising}, \ddnm~\citep{wang2023zeroshot} and \blended~\citep{avrahami2023blendedlatent}}
For the main comparison, all methods are evaluated under a fixed budget of 50 NFEs. Since our method requires two denoiser evaluations per diffusion step, we use 25 steps to match this budget. We focus on this low-NFE regime as it reflects realistic settings, where inference is constrained by latency and compute. To ensure fairness, all methods are run in the latent space with downsampled masks, and extensive hyperparameter tuning is performed for each baseline on each dataset. For baselines that require VJP or backpropagation through the denoiser, we report their actual runtime and memory costs, ensuring that comparisons reflect effective inference cost rather than nominal NFE counts. Average runtime and memory usage across all the experiments, measured on H100 GPUs, are provided in \Cref{tab:runtime}.

\begin{table*}[t]
    \centering
    \captionsetup{font=small}
    \caption{\textbf{Top}: Quantitative results on \ffhq\ $768 \times 768$ with 5k samples. \textbf{Bottom}: \div2k $768 \times 768$ with $900$ samples. For FID, pFID, and LPIPS, the lower the better. For cPSNR, the higher the better. \rebuttal{50 NFEs were used}.}
    \vspace{-.3cm}
\resizebox{\textwidth}{!}{
\begin{tabular}{lcccc|cccc|cccc|cccc}
\toprule
 & \multicolumn{4}{c}{Half} & \multicolumn{4}{c}{Center} & \multicolumn{4}{c}{Top} & \multicolumn{4}{c}{Bottom} \\
\cmidrule(lr){2-5} \cmidrule(lr){6-9} \cmidrule(lr){10-13} \cmidrule(lr){14-17}
Method & FID & pFID & cPSNR & LPIPS & FID & pFID & cPSNR & LPIPS & FID & pFID & cPSNR & LPIPS & FID & pFID & cPSNR & LPIPS \\
\midrule
& \multicolumn{16}{c}{\textbf{FFHQ} \; $768 \times 768$} \\
\midrule

% \rowcolor{rebuttalColor!30}
 \blended
& 23.5 & 16.3 & \textbf{31.32} & 0.38
& 35.3 & 36.7 & 31.54 & 0.33
& 32.8 & 15.8 & \underline{32.05} & 0.38
& 43.7 & 19.8 & \textbf{30.85} & 0.37 \\

\daps
& 17.9 & 25.1 & 30.50 & {0.36}
& 35.1 & 54.5 & 31.15 & 0.32
& 30.1 & 30.5 & 31.54 & 0.39
& 52.8 & 27.6 & 30.30 & 0.34 \\

% \rowcolor{rebuttalColor!30}
 \ddnm
& 12.3 & 13.68 & \underline{31.27} & \textbf{0.33}
& 24.4 & 34.8 & {31.61} & \textbf{0.27}
& {22.3} & 23.2 & 31.82 & \underline{0.36}
& 38.3 & 19.6 & 30.51 & \textbf{0.32} \\

% \rowcolor{rebuttalColor!30}
 \diffpir
& \underline{12.1} & \underline{11.23} & 30.91 & {0.36}
& \underline{19.4} & \underline{19.6} & \textbf{31.67} & {0.30}
& \textbf{19.7} & \underline{14.1} & \textbf{32.07} & \underline{0.36}
& \underline{30.7} & \underline{11.4} & \underline{30.74} & 0.35 \\

\flowchef
& 20.2 & 16.5 & 30.41 & {0.36}
& 29.3 & 35.0 & 31.00 & 0.31
& 27.8 & 21.1 & 31.05 & \underline{0.36}
& 35.9 & 22.7 & 29.94 & 0.35 \\

\flowdps
& 36.2 & 49.0 & 26.72 & 0.46
& 49.5 & 79.9 & 23.36 & 0.53
& 52.1 & 74.2 & 24.15 & 0.56
& 72.3 & 71.5 & 23.06 & 0.55 \\

\pnpflow
& 20.5 & 33.4 & 30.62 & 0.37
& 36.6 & 65.1 & \textbf{31.67} & 0.32
& 33.6 & 42.7 & 31.54 & 0.38
& 56.8 & 33.4 & 29.95 & {0.33} \\

\psld
& 116.3 & 73.8 & 6.89 & 0.81
& 98.0  & 69.1 & 10.09 & 0.73
& 120.6 & 75.4 & 7.06 & 0.81
& 107.0 & 70.4 & 6.46 & 0.81 \\

\reddiff
& 28.5 & 37.9 & 27.39 & 0.39
& 30.7 & 41.8 & 27.85 & 0.32
& 33.0 & 41.1 & 27.92 & 0.41
& 76.4 & 41.3 & 26.96 & 0.39 \\

\resample
& 32.4 & 48.8 & 28.53 & 0.44
& 53.8 & 103.4 & 28.46 & 0.40
& 63.2 & 56.2 & 29.02 & 0.44
& 97.8 & 57.0 & 28.06 & 0.44 \\

\rowcolor{deepcarrotorange!60} \algo\ (ours)
& \textbf{9.6} & \textbf{6.6} & 31.03 & \textbf{0.33}
& \textbf{15.5} & \textbf{14.0} & 31.38 & \textbf{0.27}
& \textbf{19.7} & \textbf{12.5} & 31.64 & \textbf{0.34}
& \textbf{29.6} & \textbf{8.6} & 30.50 & \textbf{0.32} \\
\midrule

& \multicolumn{16}{c}{\textbf{DIV2K} \; $768 \times 768$} \\
\midrule

% \rowcolor{rebuttalColor!30}
 \blended
& 43.6 & \underline{12.9} & \underline{26.03} & {0.37}
& 54.8 & \underline{20.2} & 26.43 & 0.35
& 44.8 & \underline{13.2} & {25.28} & {0.39}
& 48.1 & \textbf{13.1} & \underline{26.85} & 0.38 \\

\daps
& 51.0 & 38.4 & 25.92 & 0.46
& 74.8 & 67.6 & 26.14 & 0.44
& 54.8 & 41.0 & 25.22 & 0.44
& 61.2 & 39.7 & 26.71 & 0.50 \\

% \rowcolor{rebuttalColor!30}
 \ddnm
& 42.5 & 21.2 & \underline{26.03} & 0.41
& 57.7 & 38.5 & \underline{26.61} & 0.37
& 45.7 & 21.3 & \textbf{25.36} & 0.42
& 49.6 & 23.2 & 26.81 & 0.45 \\

% \rowcolor{rebuttalColor!30}
 \diffpir
& \underline{41.1} & \underline{12.9} & \textbf{26.09} & {0.37}
& \underline{52.8} & 21.4 & 26.58 & 0.34
& \underline{43.5} & \textbf{13.1} & \textbf{25.36} & {0.39}
& \underline{44.9} & 14.9 & \textbf{26.91} & 0.39 \\

\flowchef
& 43.3 & \textbf{12.2} & 25.78 & \underline{0.36}
& {53.6} & 22.3 & 26.27 & \underline{0.32}
& 45.0 & 13.8 & 25.09 & \textbf{0.37}
& 46.9 & \underline{13.2} & 26.57 & \textbf{0.37} \\

\flowdps
& 50.8 & 33.2 & 21.30 & 0.49
& 70.3 & 62.8 & 18.38 & 0.63
& 64.1 & 57.9 & 17.43 & 0.65
& 64.2 & 57.3 & 19.06 & 0.63 \\

\pnpflow
& 54.2 & 42.7 & 26.00 & 0.46
& 79.7 & 71.4 & \textbf{26.63} & 0.44
& 57.1 & 33.5 & 25.19 & 0.44
& 64.7 & 50.8 & 26.61 & 0.50 \\

\psld
& 66.4 & 32.3 & 6.15 & 0.79
& 66.9 & 35.7 & 9.89 & 0.72
& 66.5 & 31.9 & 6.35 & 0.79
& 66.3 & 32.6 & 6.41 & 0.78 \\

\reddiff
& 54.2 & 45.7 & 22.64 & 0.49
& 77.4 & 69.8 & 23.25 & 0.46
& 57.7 & 40.9 & 22.17 & 0.48
& 60.6 & 46.3 & 23.41 & 0.52 \\

\resample
& 52.7 & 34.1 & 23.33 & 0.47
& 80.8 & 63.8 & 23.76 & 0.43
& 56.1 & 33.1 & 22.84 & 0.47
& 60.6 & 41.0 & 24.06 & 0.48 \\

\rowcolor{deepcarrotorange!60} \algo\ (ours)
& \textbf{39.2} & 13.0 & 25.90 & \textbf{0.35}
& \textbf{50.7} & \textbf{19.5} & 26.41 & \textbf{0.31}
& \textbf{41.4} & 13.7 & 25.19 & \textbf{0.37}
& \textbf{43.4} & 13.4 & 26.72 & \textbf{0.37} \\
\bottomrule

\end{tabular}
\label{tab:ffhqdiv2k}
}
\vspace{-.4cm}
\end{table*}

\subsection{Main results}
\begin{wraptable}{r}{0.4\textwidth}
    \centering
    \captionsetup{font=small}
    \caption{Results on \PieBench\ with $556$ samples and 50 NFEs.}
    \resizebox{0.4\textwidth}{!}{
\begin{tabular}{lcccccc}
\toprule
Method & FID & pFID & cPSNR & LPIPS & CLIP & CLIP-ED \\
\midrule

% \rowcolor{rebuttalColor!30}
 
\blended 
& 65.5 
& 27.0 
& 26.60 
& {0.31} 
& \underline{26.32} 
& \underline{23.15} \\

\daps 
& 65.9 
& 30.2 
& \underline{27.08} 
& 0.34 
& 25.57 
& 21.75 \\

% \rowcolor{rebuttalColor!30}
 
\ddnm 
& \textbf{61.4} 
& 26.9 
& \textbf{27.29} 
& {0.31} 
& 26.27 
& 22.96 \\

% \rowcolor{rebuttalColor!30}
 
\diffpir 
& \underline{63.5} 
& \underline{25.4} 
& 26.98 
& \textbf{0.30} 
& 26.21 
& 23.04 \\

\flowchef 
& 68.3 
& 27.4 
& 26.84 
& \textbf{0.30} 
& 26.02 
& 22.47 \\

\flowdps 
& 74.6 
& 42.7 
& 22.05 
& 0.45 
& \textbf{26.35} 
& 22.79 \\

\pnpflow 
& 66.8 
& 32.1 
& 26.90 
& 0.34 
& 25.62 
& 21.02 \\

\psld 
& 94.1 
& 56.2 
& 14.25 
& 0.65 
& \underline{26.32} 
& 21.81 \\

\reddiff 
& 69.5 
& 35.2 
& 24.34 
& 0.37 
& 25.27 
& 21.18 \\

\resample 
& 71.0 
& 33.9 
& 24.45 
& 0.35 
& 25.71 
& 22.03 \\

\rowcolor{deepcarrotorange!60} 
\algo\ (ours) 
& \textbf{61.4} 
& \textbf{24.7} 
& 27.03 
& \textbf{0.30} 
& 26.30 
& \textbf{23.36} \\

\bottomrule
\end{tabular}
    }
\label{tab:PIE}
\end{wraptable}

Tables \ref{tab:ffhqdiv2k} and \ref{tab:PIE} summarize the results on \ffhq, \div2k\ and \PieBench, respectively. On FFHQ (Table 3), DING achieves the best performance on all masks and almost all the metrics. In particular, it improves both pFID and FID by significant margins over the strongest competing method \flowchef. It also obtains the highest cPSNR scores, indicating a faithful reconstruction of the visible content, while simultaneously producing visually coherent completions with the lowest LPIPS. On \div2k, the comparison is more nuanced. \algo\ consistently attains the best FID and LPIPS across all four masks, while remaining competitive on pFID and comparable to strong baselines on cPSNR. On \PieBench, \algo\ achieves the best results on all metrics except cPSNR and CLIP. We note, however, that although \flowdps\ and \psld\ obtain slightly higher CLIP scores, they perform markedly worse on fidelity and perceptual quality metrics, suggesting that their improvements in CLIP may reflect metric hacking rather than genuine reconstruction quality. We provide qualitative comparisons of the reconstruction in \Cref{fig:sd3ft} and \Cref{sec:reconstructions}. 

We now compare \algo\ with a Stable Diffusion 3 model fine-tuned for inpainting\footnote{\url{https://huggingface.co/alimama-creative/SD3-Controlnet-Inpainting}}, trained on 12M images at $1024\times1024$ resolution. To ensure fairness, both models are evaluated under the same runtime budget (\textbf{2.2s}), which corresponds to 56 NFEs for \algo\ and 28 NFEs for the fine-tuned baseline. \rebuttal{We also provide the results for the finetuned model using 56 NFEs.}
\begin{wraptable}{r}{0.5\textwidth}
    \centering
    \captionsetup{font=small}
    \caption{Results on the \PieBench\ with $556$ samples.}
    \vspace{-.3cm}
    \resizebox{0.5\textwidth}{!}{
\begin{tabular}{lcccccc}
\toprule
& FID & pFID & cPSNR & LPIPS & CLIP & CLIP-ED \\
\textcolor{red}{3 seconds} & \\
\midrule
SD3 Inpaint (28) & 68.7 & 30.5 & 18.85 & \underline{0.34} & 26.37 & \underline{23.10} \\ 
% \rowcolor{rebuttalColor!30}
%  SD3 Inpaint (56) & \underline{68.2} & \underline{30.3} & \underline{19.03} & \underline{0.34} & \underline{26.38} & 23.03  \\                                                                                                                       
\rowcolor{deepcarrotorange!60} \algo\ (ours) & \textbf{63.6} & \textbf{24.6} & \textbf{26.98} & \textbf{0.30} & \textbf{26.63} & \textbf{23.70} \\    
\bottomrule
\end{tabular}
    }
\label{tab:PIE-ft}
\end{wraptable}
The results are given in Tables~\ref{tab:finetuning} and \ref{tab:PIE-ft}. Across FFHQ, DIV2K, and \PieBench, \algo\ consistently outperforms the fine-tuned SD3 model on all reported metrics. The gains are especially pronounced in cPSNR, where \algo\ achieves 8–10 dB higher fidelity to the observed pixels. This indicates that our method preserves the known content far more accurately while still producing realistic completions, as confirmed by lower FID and LPIPS. On \PieBench, \algo\ further improves over the fine-tuned baseline on every metric, including perceptual ones (pFID, LPIPS), while also yielding stronger text–image alignment (CLIP, CLIP-ED). See \Cref{fig:sd3ft} for a qualitative comparison of the reconstructions. These results demonstrate that, even without task-specific fine-tuning on a large amount of images, \algo\ not only matches but surpasses a specialized SD3 inpaint model.
Overall, these results show that our method provides the strongest overall trade-off between realism and fidelity under low NFE budgets. 
\begin{table}[H]
    \centering
    \captionsetup{font=small}
    % \caption{Ablation results of Delayed \algo\ versus \algo\ on FFHQ (5k samples) and DIV2K (900 samples) with 50 NFEs.}
    \caption{\algo\ compared to SD3 fine-tuned (28 and 56 NFEs) for inpainting tasks.}
    \vspace{-.3cm}
\resizebox{\textwidth}{!}{
\begin{tabular}{lcccc|cccc|cccc|cccc}
\toprule
 & \multicolumn{4}{c}{Half} & \multicolumn{4}{c}{Center} & \multicolumn{4}{c}{Top} & \multicolumn{4}{c}{Bottom} \\
\cmidrule(lr){2-5} \cmidrule(lr){6-9} \cmidrule(lr){10-13} \cmidrule(lr){14-17}
Method & FID & pFID & cPSNR & LPIPS & FID & pFID & cPSNR & LPIPS & FID & pFID & cPSNR & LPIPS & FID & pFID & cPSNR & LPIPS \\
\midrule
& \multicolumn{16}{c}{\textbf{FFHQ} $512 \times 512$} \\
\midrule
SD3 Inpaint (28) & \underline{23.5} & 10.7 & \underline{21.69} & \underline{0.37} & \underline{62.1} & \underline{33.9} & \underline{22.18} & \underline{0.31} & \underline{34.7} & 17.8 & \underline{21.64} & \underline{0.36} & \underline{42.4} & \underline{16.5} & \underline{21.78} & 0.37 \\

% \rowcolor{rebuttalColor!30}
 SD3 Inpaint (56) & 23.7 & \underline{10.3} & 21.53 & \underline{0.37} & 63.7 & 34.4 & 21.94 & \underline{0.31} & 35.4 & \underline{16.5} & 21.41 & \underline{0.36} & 43.8 & 16.8 & 21.53 & \underline{0.36} \\

\rowcolor{deepcarrotorange!60} \algo\ (ours) & \textbf{9.3} & \textbf{5.8} & \textbf{31.40} & \textbf{0.32} & \textbf{20.2} & \textbf{15.5} & \textbf{31.39} & \textbf{0.28} & \textbf{17.3} & \textbf{8.4} & \textbf{31.96} & \textbf{0.33} & \textbf{33.8} & \textbf{12.2} & \textbf{31.27} & \textbf{0.34} \\
\midrule
& \multicolumn{16}{c}{\textbf{DIV2K} $512 \times 512$} \\
\midrule
SD3 Inpaint (28) & 45.9 & 15.0 & 17.95 & \underline{0.40} & \underline{54.2} & 22.1 & 18.57 & \underline{0.36} & 48.8 & 16.6 & 18.12 & 0.42 & 51.0 & 17.5 & 18.95 & \underline{0.41} \\

% \rowcolor{rebuttalColor!30}
 SD3 Inpaint (56) & \underline{45.1} & \textbf{14.0} & \underline{17.91} & \underline{0.40} & \underline{54.2} & \textbf{20.5} & \underline{18.63} & \underline{0.36} & \underline{48.6} & \underline{16.1} & \underline{18.16} & \underline{0.41} & \underline{50.3} & \underline{17.2} & \underline{19.02} & \underline{0.41} \\

\rowcolor{deepcarrotorange!60} \algo\ (ours) & \textbf{41.5} & \underline{14.2} & \textbf{26.09} & \textbf{0.37} & \textbf{52.4} & \underline{21.5} & \textbf{26.53} & \textbf{0.33} & \textbf{43.7} & \textbf{13.8} & \textbf{25.47} & \textbf{0.38} & \textbf{45.4} & \textbf{15.4} & \textbf{26.94} & \textbf{0.38} \\
\bottomrule
\end{tabular}
\label{tab:finetuning}
}
\end{table}
\subsection{Ablations}
\label{subsec:ablation}
\paragraph{Doubled NFE per diffusion step.}
Because the $\bx_1$-predictor must be evaluated at the proxy variable (Line~\ref{line:noise_proxy} in \Cref{algo:decoupled}), our algorithm requires two NFEs per diffusion step.
An immediate question is whether this overhead is needed.
To explore this, we introduce a variant in which the noise prediction from the previous step is reused instead of being recomputed at the proxy. We coin this variant as Delayed \algo, where Line~\ref{line:noise_proxy} is replaced by
$
    \hat\bx^{\mathrm{pxy}}_1 \gets (\textcolor{ruddy}{\bx} - \sigma_\tk{k} \noisepred{\tk{k+1}}{}{\textcolor{ruddy}{\bx}}) / \acp_\tk{k},
$
and we further set $\ddimstd_t = \std_t (1 - \acp_t)$, which we found to yield the best performance in this setting.
A quantitative comparison with the original \algo\ is reported in \Cref{tab:delayed_ablation}, showing that while Delayed \algo\ reduces the NFE cost per step, it consistently underperforms across metrics and masking patterns, indicating that the doubled NFE is necessary to retain the full effectiveness of our approach.
\begin{table}[H]
    \centering
    \captionsetup{font=small}
    % \caption{Ablation results of Delayed \algo\ versus \algo\ on FFHQ (5k samples) and DIV2K (900 samples) with 50 NFEs.}
    \caption{Delayed \algo\ compared to \algo\ on FFHQ (5k samples) and DIV2K (900 samples) with 50 NFEs.}
    \vspace{-.3cm}
\resizebox{\textwidth}{!}{
\begin{tabular}{lcccc|cccc|cccc|cccc}
\toprule
Method & \multicolumn{4}{c}{Half} & \multicolumn{4}{c}{Center} & \multicolumn{4}{c}{Top} & \multicolumn{4}{c}{Bottom} \\
\cmidrule(lr){2-5} \cmidrule(lr){6-9} \cmidrule(lr){10-13} \cmidrule(lr){14-17}
 & pFID & FID & cPSNR & LPIPS & pFID & FID & cPSNR & LPIPS & pFID & FID & cPSNR & LPIPS & pFID & FID & cPSNR & LPIPS \\
\midrule
& \multicolumn{16}{c}{\textbf{FFHQ} $768 \times 768$} \\
\midrule
Delayed \algo & \textbf{9.1} & \underline{7.4} & \underline{29.21} & \textbf{0.33} & \underline{21.3} & \underline{20.7} & \underline{29.90} & \textbf{0.26} & \textbf{15.7} & \textbf{9.9} & \underline{29.84} & \textbf{0.33} & \underline{31.0} & \underline{12.3} & \underline{28.88} & \underline{0.33} \\
\rowcolor{deepcarrotorange!60} \algo  & \underline{9.6} & \textbf{6.6} & \textbf{31.03} & \textbf{0.33} & \textbf{15.5} & \textbf{14.0} & \textbf{31.38} & \underline{0.27} & \underline{19.7} & \underline{12.5} & \textbf{31.64} & \underline{0.34} & \textbf{29.6} & \textbf{8.6} & \textbf{30.50} & \textbf{0.32} \\
\midrule
& \multicolumn{16}{c}{\textbf{DIV2K} $768 \times 768$} \\
\midrule
Delayed \algo & \underline{43.9} & \underline{15.9} & \underline{24.88} & \underline{0.36} & \underline{55.6} & \underline{24.7} & \underline{25.52} & \underline{0.32} & \underline{45.3} & \underline{16.8} & \underline{24.36} & \underline{0.38} & \underline{47.8} & \underline{14.6} & \underline{25.64} & \underline{0.38} \\
\rowcolor{deepcarrotorange!60} \algo & \textbf{39.2} & \textbf{13.0} & \textbf{25.90} & \textbf{0.35} & \textbf{50.7} & \textbf{19.5} & \textbf{26.41} & \textbf{0.31} & \textbf{41.4} & \textbf{13.7} & \textbf{25.19} & \textbf{0.37} & \textbf{43.4} & \textbf{13.4} & \textbf{26.72} & \textbf{0.37} \\
\bottomrule
\end{tabular}
\label{tab:delayed_ablation}
}
\end{table}
\paragraph{DDIM schedule.}
Here we proceed to compare the behavior of our algorithm under different schedules $(\ddimstd_t)_{t \in [0, 1]}$. For this purpose we compare against some natural candidates. \tcp{(A)}: we consider the DDPM schedule used in \citet{ho2020denoising} and which corresponds to using in \eqref{eq:ddpm-reverse} a standard deviation that depends on both $s$ and $t$, \emph{i.e.}, $\smash{\ddimstd_s(t) = \std_s (\std^2 _t - (\acp_t / \acp_s)^2 \std^2 _s)^{1/2} / \std _t}$. \tcp{(B)}: as we cannot use deterministic sampling in our approach, we rescale the DDPM schedule (A) with $0.01$ to approach deterministic sampling. \tcp{(C)}: $\ddimstd_s = \std_s$, which is the maximum allowed standard deviation in \eqref{eq:ddpm-reverse}. In this scenario, the transition is $\pdata{s\tbar t}{\bx_t}{\bx_s}[\param] = \normpdf(\bx_s; \denoiser{t}{}{\bx_t}[\param], \std^2 _s \Id_\dimx)$ and resembles the prior transition used in \cite{martin2025pnpflow}. \tcp{(D)}: $\ddimstd_s = \std_s \sqrt{1 - \acp_s}$, which corresponds to a slower decay of the standard deviation compared to our default choice $\ddimstd_s = \std_s (1 - \acp_s)$. The results on FFHQ with 5k samples and 50 NFEs are reported in \Cref{tab:schedule_ablation}. 
We observe that the rescaled DDPM schedule (B) degrades significantly across all metrics, while (A) and (C) yield nearly identical performance. This suggests that maintaining sufficient stochasticity at the beginning of the diffusion process is crucial for strong performance.
Among the alternatives, (D) performs best; still, it is outperformed by our default schedule, confirming the benefit of a faster decay of $(\ddimstd_t)$. 

\begin{table}[H]
    \centering
    \captionsetup{font=small}
    \caption{Ablation results for the DDIM schedule $(\ddimstd_t)$ on \ffhq\ $768 \times 768$ with 5k samples and 50 NFEs.}
    \vspace{-.3cm}
\resizebox{\textwidth}{!}{
\begin{tabular}{lcccc|cccc|cccc|cccc}
\toprule
Method & \multicolumn{4}{c}{Half} & \multicolumn{4}{c}{Center} & \multicolumn{4}{c}{Top} & \multicolumn{4}{c}{Bottom} \\
\cmidrule(lr){2-5} \cmidrule(lr){6-9} \cmidrule(lr){10-13} \cmidrule(lr){14-17}
 & FID & pFID & cPSNR & LPIPS & FID & pFID & cPSNR & LPIPS & FID & pFID & cPSNR & LPIPS & FID & pFID & cPSNR & LPIPS \\
\midrule
(A) & 13.9 & 14.0 & \underline{31.19} & 0.36 & 19.1 & 25.5 & \underline{31.50} & 0.30 & 21.8 & 18.9 & \underline{31.80} & 0.38 & 35.1 & 15.1 & \underline{30.70} & 0.35 \\
(B) & 21.5 & 18.7 & 26.06 & 0.41 & 29.0 & 31.9 & 26.23 & 0.35 & 31.7 & 21.1 & 26.64 & 0.41 & 48.4 & 28.6 & 25.56 & 0.40 \\
(C) & 13.9 & 14.2 & \underline{31.19} & 0.36 & 19.1 & 25.5 & \underline{31.50} & 0.30 & 21.8 & 19.0 & \underline{31.80} & 0.38 & 35.1 & 15.6 & \underline{30.70} & 0.35 \\
(D) & \underline{10.2} & \underline{10.7} & \textbf{31.33} & \underline{0.33} & \underline{16.7} & \underline{19.0} & \textbf{31.70} & \underline{0.27} & \textbf{19.6} & \underline{15.7} & \textbf{31.95} & \underline{0.35} & \underline{31.6} & \underline{12.0} & \textbf{30.81} & \underline{0.32} \\
\rowcolor{deepcarrotorange!60} Default & \textbf{9.6} & \textbf{6.6} & 31.03 & \textbf{0.33} & \textbf{15.5} & \textbf{14.0} & 31.38 & \textbf{0.27} & \underline{19.7} & \textbf{12.5} & 31.64 & \textbf{0.34} & \textbf{29.6} & \textbf{8.6} & 30.50 & \textbf{0.32} \\
\bottomrule
\end{tabular}
}
\label{tab:schedule_ablation}
\end{table}

\section{Conclusion}
We have introduced {\algo}, a novel diffusion-based method for zero-shot inpainting that operates fully in the latent space and enables fast, memory-efficient inference under low-NFE budgets. Through extensive experiments across multiple benchmarks, we have shown that {\algo} consistently outperforms existing zero-shot approaches and even surpasses a fine-tuned Stable Diffusion 3 model for image editing, despite requiring no expensive training. Notably, our method produces globally coherent reconstructions while preserving the visible content with high fidelity.

\paragraph{Limitations and future directions.} While these results highlight the effectiveness and practicality of \algo, several avenues remain open. An important limitation of our current approach is that performance does not monotonically improve as the compute budget increases. Ideally, one would like reconstruction accuracy to keep improving with additional sampling steps, potentially beyond the standard diffusion horizon, but we observe diminishing returns due to the limitations of our current DDIM schedule. Addressing this issue, for example by designing guidance schemes or noise schedules that continue to scale gracefully with compute, remains an important direction for future work. Moreover, while our framework is fully operational in the latent space, its applicability is currently limited to inpainting, as this is the only observation operator we can reliably lift to the latent space. Extending the method to accommodate more general forward operators and a broader class of inverse problems, while preserving the same level of efficiency achieved for inpainting, is a challenging yet promising direction for future research.
% \newpage

\paragraph{Reproducibility statement.} We place strong emphasis on reproducibility. To this end, we provide the full source code of our method along with implementations of all baseline methods used in the paper. Our repository also includes scripts to reproduce every experiment, as well as configuration files specifying all hyperparameters and settings for each baseline and experimental setup. Together, these resources ensure that all results reported in this work can be fully reproduced and easily extended.

\paragraph{Ethics statement.} While the proposed approach demonstrates clear benefits for applications in restoration, accessibility, and creative media, it also lies at the borderline of ethical considerations. Diffusion-based inpainting methods can be misappropriated for producing deceptive or harmful content, such as manipulated images or synthetic media that obscure authenticity. This dual-use nature highlights the need for proactive safeguards, including transparent usage guidelines, traceable model outputs, and continued development of forensic detection tools to ensure responsible integration of such technologies.

\section*{Acknowledgements}
The work of Badr Moufad has been supported by Technology Innovation Institute (TII), project Fed2Learn. The work of Eric Moulines has been partly funded by the European Union (ERC-2022-SYG-OCEAN-101071601). Views and opinions expressed are however those
of the author(s) only and do not necessarily reflect those of the European Union or the European Research Council Executive Agency. Neither the European Union nor the granting authority can be held responsible for them. This work was granted access to the HPC resources of IDRIS under the allocations 2025-AD011015980 and 2025-AD011016484 made by GENCI.

\newpage
% --- biblio
\bibliographystyle{iclr2026_conference}
\bibliography{bibliography}

% --- appendix
\clearpage
\newpage
\appendix
% \section{More on Related}
% \begin{proposition} 
% $$ 
%    \pot{t}{\bx_t} = \int \pot{0}{\frac{\bx_t - \std_t \bx_\Tf}{\acp_t}} \, \pdata{\Tf\tbar t}{\bx_t}{\bx_\Tf} \rmd \bx_\Tf 
% $$
% \end{proposition}
% \begin{proof} 
%     By definition we have that 
%     $$ 
%     \pot{t}{\bx_t} \eqdef \frac{\int \pot{0}{\bx_0} \normpdf(\bx_t; \acp_t \bx_0, \std^2 _t \Id) \, \pdata{0}{}{\bx_0} \rmd \bx_0}{\int \normpdf(\bx_t; \acp_t \tilde\bx_0, \std^2 _t \Id) \, \pdata{0}{}{\tilde\bx_0} \rmd \tilde\bx_0} 
%     $$
%     and using the change of variable $\bx_\Tf = \frac{\bx_t - \acp_t \bx_0}{\sigma_t}$ in both the numerator and denominator, 
%     \begin{multline*}
%     \int \pot{0}{\bx_0} \normpdf(\bx_t; \acp_t \bx_0, \std^2 _t \Id) \, \pdata{0}{}{\bx_0}  \rmd \bx_0 \\
%     = - \frac{\acp_t}{\std_t} \int \pot{0}{\frac{\bx_t - \std_t \bx_\Tf}{\acp_t}}\pdata{0}{}{\frac{\bx_t - \std_t \bx_\Tf}{\acp_t}}\, \normpdf(\bx_\Tf; \zero, \Id) \rmd \bx_\Tf
%     \end{multline*}
%     and 
%     $$ 
%     \int \normpdf(\bx_t; \acp_t \bx_0, \std^2 _t \Id) \, \pdata{0}{}{\bx_0}  \rmd \bx_0 = - \frac{\acp_t}{\std_t} \int \pdata{0}{}{\frac{\bx_t - \std_t \bx_\Tf}{\acp_t}}\, \normpdf(\bx_\Tf; \zero, \Id) \rmd \bx_\Tf \eqsp.
%     $$ 
% \end{proof} 
\begin{figure}[t]
    \centering
    \includegraphics[width=1.\textwidth]{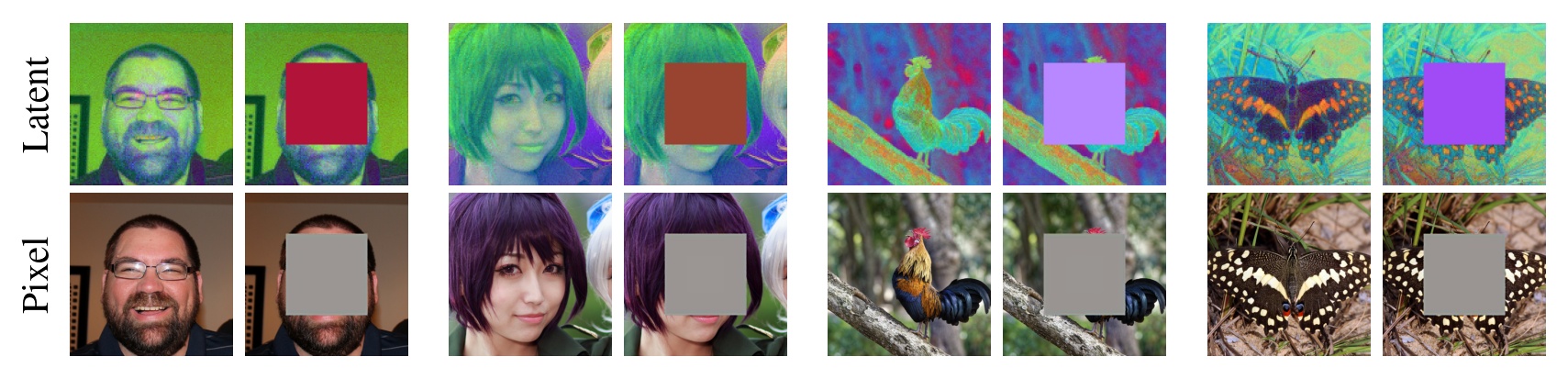}
    \captionsetup{font=small}
    \caption{%
    \rebuttal{Latent-space masking and its correspondence to pixel space using a central square mask.
    The encoder and decoder of Stable Diffusion 3.5 (medium) were used.
    The first row shows latent images alongside the encoded mask applied to each, while the second row shows their decoded counterparts.
    Notice that the masked regions in the latent space translate directly to analogous masked regions in pixel space.
    For that sake of visualization, since the latent images have 16 channels, we apply PCA and visualize the first 3 components.%
    }}
    \label{fig:illustration-latent-masking}
\end{figure}

\section{Method details}
\subsection{Derivation of the posterior \eqref{eq:posterior-transition}}
\label{sec:posterior-derivation}
Recall that given $\bz_s$, the posterior transition of interest is
\begin{align*}
\hpost{s\tbar t}{\bz_s, \bx_t, \obs}{\bx_s}[\param] 
& \propto \hpot{s}{\bx_s, \bz_s}{\obs}[\param] \; \pdata{s\tbar t}{\bx_t}{\bx_s}[\eta,\param] \eqsp.
\end{align*}
Denoting by $\tilde\obs_s = \acp_s \obs + \std_s \noisepred{s}{}{\bz_s}[\param][\unmask]$ the effective observation, we have that 
$$
\hpot{s}{\bx_s, \bz_s}{\obs}[\param] \propto \normpdf(\tcz{\tilde\obs_s}; \bx_s[\unmask], \acp^2 _s\std^2 _\obs \Id_\dimobs) \eqsp,
$$
and since the reverse transition writes 
$$
\pdata{s\tbar t}{\bx_t}{\bx_s}[\eta, \param] = \normpdf(\bx_s[\mask]; \mu^\param _{s\tbar t}(\bx_t;\!\ddimstd)[\mask], \ddimstd^2 _s \Id_{\dimx - \dimobs}) \normpdf(\bx_s[\unmask]; \mu^\param _{s\tbar t}(\bx_t;\!\ddimstd)[\unmask], \ddimstd^2 _s \Id_{\dimobs})
 \eqsp,$$ we obtain 
\begin{multline*} 
    \hpost{s\tbar t}{\bz_s, \bx_t, \obs}{\bx_s}[\param] = \normpdf(\bx_s[\mask]; \mu^\param _{s\tbar t}(\bx_t;\!\ddimstd)[\mask], \ddimstd^2 _s \Id_{\dimx - \dimobs}) \\
   \times \frac{\normpdf(\tilde\obs_s; \bx_s[\unmask], \acp^2 _s\std^2 _\obs \Id_\dimobs) \normpdf(\bx_s[\unmask]; \mu^\param _{s\tbar t}(\bx_t;\!\ddimstd)[\unmask], \ddimstd^2 _s \Id_{\dimobs})}{\int \normpdf(\tilde\obs_s; \tilde\bx_s[\unmask], \acp^2 _s\std^2 _\obs \Id_\dimobs) \normpdf(\tilde\bx_s[\unmask]; \mu^\param _{s\tbar t}(\bx_t;\!\ddimstd)[\unmask], \ddimstd^2 _s \Id_{\dimobs}) \, \rmd \tilde\bx_s[\unmask]} \eqsp.
\end{multline*}
The formula \eqref{eq:gaussian-conjugacy} follows by applying \citet[equation~2.116]{bishop2006pattern} to the second normalized transition on the right-hand side.  
\subsection{Comparison with related works}
We start by providing an explicit comparison with the closest works. 
\label{subsec:related-works}
\paragraph{Comparison with the transition in \cite{cardoso2023monte}.} Let $\tau \in [0, 1]$ be a timestep such that $\std _\obs = \sigma_\tau / \acp_\tau$. Such a $\tau$ always exists when the linear schedule is used for example. The transition used in the SMC algorithm in \cite{cardoso2023monte} for $s > \tau$ is given by
\rebuttal{
\begin{equation}
    \label{eq:mcgdiff-transition1}
    \hpost{s\tbar t}{\bx_t}{\bx_s}[\param] \propto \normpdf(\acp_s \obs; \bx_s[\unmask], \sigma^2 _{s\tbar \tau} \Id_\dimobs) \pdata{s\tbar t}{\bx_t}{\bx_s}[\ddimstd, \param]
\end{equation}
Using the same conjugation formulas as in the previous section, we find that}
\begin{multline} 
     \label{eq:mcgdiff-transition}
    \hpost{s\tbar t}{\bx_t}{\bx_s}[\param] = \normpdf(\bx_s[\mask], \mu^\param _{s\tbar t}(\bx_t;\!\ddimstd)[\mask], \ddimstd^2 _s \Id_{\dimx - \dimobs}) \\
     \times \normpdf(\bx_s[\unmask], (1 - \scdhilight{\tilde\gamma_{s\tbar t}})\bmu^\param _{s\tbar t}(\bx_t;\!\ddimstd)[\unmask] + \scdhilight{\tilde\gamma_{s\tbar t}} \hilight{\acp_s \obs},  \textcolor{purple}{\std^2 _{t\tbar \tau} \tilde\gamma_{s\tbar t}} \Id_{\dimx - \dimobs}),
\end{multline}
where $\std^2 _{t\tbar \tau} \eqdef \std^2 _t - (\acp_t / \acp_\tau)^2 \sigma^2 _\tau$ and $\tilde\gamma_{s\tbar t} = \ddimstd^2 _s / (\ddimstd^2 _s + \std^2 _{t\tbar \tau})$. This is to be contrasted with our update, given a sample $\bz_s$, 
\begin{multline*}    
\hpost{s\tbar t}{\bz_s, \bx_t, \obs}{\bx_s}[\param] 
= \normpdf%\!
\big(\bx_s[\mask]; \bmu^\param_{s\tbar t}(\bx_t;\!\ddimstd)[\mask], \, \ddimstd^2_s \Id_{\dimx - \dimobs}\big) \\
\times \normpdf %\!
\Big(\bx_s[\unmask]; (1 - \scdhilight{\gamma_{s\tbar t}}) \bmu^\param_{s\tbar t}(\bx_t;\!\ddimstd)[\unmask] 
+ \scdhilight{\gamma_{s\tbar t}} \big( \hilight{\alpha_s \obs + \sigma_s \denoiser{s}{}{\bz_s}[\param][1][\unmask]}\big), \, 
\textcolor{purple}{\alpha_s^2 \sigma_\obs^2 \gamma_{s\tbar t}} \, \Id_{\dimobs}\Big) \eqsp,
\end{multline*}
where $\gamma_{s\tbar t} \eqdef \eta_s^2 / (\eta_s^2 + \alpha_s^2 \sigma_\obs^2)$. Hence, {\sc{MCGDiff}} differs from \algo\ on the choice of effective observation, which in this case is $\tilde\obs_s = \acp_s \obs$, the choice of variance in the transition and the coefficient of the convex combination. \rebuttal{From \eqref{eq:mcgdiff-transition1} it can be seen that MCGDIFF assumes the approximate model
$
\normpdf(\acp_s \obs; \bx_s[\unmask], \sigma^2 _{s\tbar \tau} \Id_\dimobs)
$
for the true likelihood $\pot{s}{\bx_s}$ \eqref{eq:guidance-term}.}

\paragraph{Comparison with the transition in \cite{zhu2023denoising} and \cite{martin2025pnpflow}.} We now write explicitely the algorithm \pnpflow\ \cite[Algorithm 3]{martin2025pnpflow} adapted to the inpainting problem we consider; see \Cref{algo:pnpflow}. We have simply adapted the notations and used $F(\bx) = \| \obs - \bx[\unmask] \|^2 / (2 \std^2 _\obs)$ in \citet[Algorithm 3]{martin2025pnpflow}. Thus, the transition used in \Cref{algo:pnpflow} is 
\begin{align*}
    \hpost{s\tbar t}{\bx_t}{\bx_s}[\param] & \propto \normpdf(\bx_s[\mask], \acp_s \denoiser{t}{}{\bx_t}[\param]{[\mask]}, \std^2 _s \Id_{\dimx - \dimobs}) \\
    & \hspace{1cm} \times \normpdf\left(\bx_s[\unmask], \left(1 - \scdhilight{\frac{\gamma_s}{\std^2 _\obs}}\right) \acp_s \denoiser{t}{}{\bx_t}[\param]{[\unmask]} + \scdhilight{\frac{\gamma_s}{\std^2 _\obs}} \hilight{\acp_s \obs},  \textcolor{purple}{\std^2 _s} \Id_{\dimx - \dimobs}\right) \eqsp.
\end{align*}
In the case of the DDIM schedule $\ddimstd_s = \sigma_s$, we have that $\mu^\param _{s\tbar t}(\bx_t) = \acp_s \denoiser{t}{}{\bx_t}[\param]$, and the MCGDIFF transition \citep{cardoso2023monte} in \eqref{eq:mcgdiff-transition} writes 
\begin{align*}
    \hpost{s\tbar t}{\bx_t}{\bx_s}[\param] &  = \normpdf(\bx_s[\mask], \acp_s \denoiser{t}{}{\bx_t}[\param]{[\mask]}, \std^2 _s \Id_{\dimx - \dimobs}) \\
    & \hspace{1cm} \times \normpdf(\bx_s[\unmask], (1 - \scdhilight{\tilde\gamma_{s\tbar t}})\acp_s \denoiser{t}{}{\bx_t}[\param]{[\unmask]} + \scdhilight{\tilde\gamma_{s\tbar t}} \hilight{\acp_s \obs},  \textcolor{purple}{\std^2 _{s\tbar \tau} \tilde\gamma_{s\tbar t}} \Id_{\dimx - \dimobs}). 
\end{align*}
Hence, the main difference lies in the coefficient of the convex combination and the variance used. 
\begin{algorithm}
    \caption{\pnpflow\ reinterpreted}
    \begin{algorithmic}[1]
        \STATE {\bfseries Input:} Decreasing timesteps $(t_k)_{k=K}^0$ with $t_K = 1$, $t_0 = 0$; adaptive stepsizes $(\gamma_k)_{k = K} ^0$.
        \STATE {\bfseries Initialize:} $\hat\bx_0 \in \mathbb{R}^d$.
        \FOR{$k = K-1$ {\bfseries to} $1$}
            \STATE $\tcz{\hat\bx_0[\unmask] \gets (1 - \frac{\gamma_k}{\std^2 _\obs}) \hat\bx_0[\unmask]  + \frac{\gamma_k}{\std^2 _\obs} \obs}$ 
            \STATE $\bw \sim \gauss(\zero_\dimx, \Id_\dimx)$
            \STATE $\bx \gets \acp_\tk{k} \hat\bx_0 + \std_\tk{k} \bw$
            \STATE $\hat\bx_0 \gets \denoiser{\tk{k}}{}{\bx}[\param]$
        \ENDFOR
        \STATE {\bfseries Return:} $\hat\bx_0$
    \end{algorithmic}
    \label{algo:pnpflow}
\end{algorithm}
\paragraph{Comparison with the transition in \cite{kim2025flowdps, patel2024flowchef}.} Here we explicitely write the transition of \flowdps\ for the inpainting case in order to understand the main differences without our method. For this purpose we rewrite \citet[Algorithm 1]{kim2025flowdps} using our notations. We note that the algorithm is written for the linear schedule $\acp_t = 1 - t$, $\sigma_t = t$ and the choice of DDIM schedule $\ddimstd_t = \std_t \sqrt{1 - \std_t}$, but we still write it with general notations to streamline the comparison with \Cref{algo:decoupled}. We also assume for the sake of simplicity that the optimization problem is solved exactly in \citet[line 7]{kim2025flowdps} (since there is no decoder as we solve the inverse problem in the latent space). The algorithm is given in \Cref{algo:flowdps}. In the specific setting where the linear schedule is used, setting $\gamma_k = \std^2 _\obs \std_\tk{k}$  in \Cref{algo:pnpflow} recovers \Cref{algo:flowdps} when $\ddimstd_k = \std_\tk{k}$.  Finally, we note that \flowdps\ can be understood as a noisy version of the \flowchef\ algorithm \citep{patel2024flowchef} and overall, follows the line of work of methods that learn a residual that is then used to translate the denoiser \citep{bansal2023universal,zhu2023denoising}. 
\begin{algorithm}
    \caption{\flowdps\ reinterpreted}
    \begin{algorithmic}[1]
        \STATE {\bfseries Input:} decreasing timesteps $(t_k)_{k=K}^0$ with $t_K = 1$, $t_0 = 0$; original image $\bx_\ast$; mask $\mask$; \\
       DDIM parameters $(\eta_k)_{k = K} ^0$.
       \STATE $\obs \gets \bx_\ast[\unmask]$
       \STATE $\bx \sim \gauss(\zero, \Id_\dimx)$
        \FOR{$k = K-1$ {\bfseries to} $0$}
            \STATE $\hat\bx_0 \gets \bx^\param _0(\bx, \tk{k+1})$
            \STATE $\hat\bx_1 \gets (\bx - \acp_\tk{k+1} \hat\bx_0) / \std_\tk{k+1}$
            \STATE \tcz{$\hat\bx_0[\unmask] \gets \acp_\tk{k} \hat\bx_0[\unmask] + \std_\tk{k} \obs$}
            \STATE $\bmu \gets \acp_\tk{k} \hat\bx_0 + (\std^2 _\tk{k} - \ddimstd^2 _{k})^{1/2} \hat\bx_1$ 
            \STATE $\bw \sim \gauss(\zero_\dimx, \Id_\dimx)$
            \STATE $\bx \gets \bmu + \eta_k \bw$
        \ENDFOR
        \STATE {\bfseries Return:} $\bx$
    \end{algorithmic}
    \label{algo:flowdps}
\end{algorithm}
\paragraph{Comparison with DiffPIR \citep{zhu2023denoising} and DDNM \citep{wang2023zeroshot}.} \rebuttal{We provide the \diffpir\ algorithm \cite[Algorithm 1]{zhu2023denoising} adapted to our inpainting case  using our own notation in \Cref{algo:diffpir}. In Line~\ref{line:diffpir-opt} we write the exact solution to the optimization problem in the original algorithm. We write the associated transition in a convenient form that allows a seamless comparison with our algorithm. Define $\gamma_t \eqdef \std^2 _t / (\std^2 _t + \lambda \acp^2 _t \std^2 _\obs)$. Then, the transition}
\begin{multline*}    
\hpost{s\tbar t}{\bx_t, \obs}{\bx_s}[\param] 
= \normpdf%\!
\big(\bx_s[\mask]; \bmu^\param_{s\tbar t}(\bx_t;\!\ddimstd)[\mask], \, \ddimstd^2_s \Id_{\dimx - \dimobs}\big) \\
\times \normpdf %\!
\Big(\bx_s[\unmask]; (1 - \scdhilight{\gamma_{t}}) \bmu^\param_{s\tbar t}(\bx_t;\!\ddimstd)[\unmask] 
+ \scdhilight{\gamma_{t}} \big( \hilight{\alpha_s \obs + (\sigma^2 _s - \ddimstd^2 _s)^{1/2} \frac{\bx_t[\unmask] - \acp_t \obs}{\std_t}}\big), \, 
\textcolor{purple}{\ddimstd^2 _s} \, \Id_{\dimobs}\Big) \eqsp,
\end{multline*}
\rebuttal{corresponds to one step of \Cref{algo:diffpir}. We highlight key distinctions:
\begin{itemize}[leftmargin=*, labelsep=0.5em]
    \item  Setting $\ddimstd_s^2 = \std_s^2$ recovers the same transition as in \pnpflow.
    \item The main distinction lies in the mean of the Gaussian transition for the unmasked region: it is a convex combination of $\bmu^\param(\bx_t;\ddimstd)[\unmask]$ and an effective observation $\alpha_s \obs + (\sigma_s^2 - \ddimstd_s^2)^{1/2} (\bx_t[\unmask] - \acp_t \obs)/\std_t$. In our algorithm, the effective observation instead takes the form $\acp_s \obs + \std_s \noisepred{s}{}{\bx_s}[\param][\unmask]$. We estimate the residual noise using the pre-trained model at timestep $s$, whereas \diffpir\ computes it as $(\bx_t[\unmask] - \acp_t \obs)/\std_t$.
    \item This residual noise is scaled differently: by $(\std_s^2 - \ddimstd_s^2)^{1/2}$ in \diffpir, and by $\std_s$ in our method.
    \item The convex combination coefficient in our cases is $\gamma_{s\tbar t} = \ddimstd^2 _s / (\ddimstd^2 _s + \acp^2 _s \std^2 _\obs)$ whereas for \diffpir\ it is set to $\gamma_t = \std^2 _t / (\std^2 _t + \acp^2 _t \std^2 _\obs)$. 
    \item Finally, the noise-free ($\std_\obs = 0$) version of \diffpir\ recovers the DDNM algorithm \citep{zhang2023towards}. 
\end{itemize}
} 
\begin{algorithm}
    \caption{\diffpir\ reinterpreted}
    \begin{algorithmic}[1]
        \STATE {\bfseries Input:} Decreasing timesteps $(t_k)_{k=K}^0$ with $t_K = 1$, $t_0 = 0$; scaling $\lambda$; original image $\bx_\ast$; mask $\mask$; DDIM parameters $(\eta_k)_{k = K} ^0$
        \STATE $\obs \gets \bx_\ast[\unmask]$
        \STATE $\bx \sim \gauss(\zero, \Id_\dimx)$.
        \FOR{$k = K-1$ {\bfseries to} $1$}
            \STATE $\hat\bx_0 \gets \bx^\param _0(\bx, \tk{k+1})$
            \STATE \tcz{$\hat\bx_0[\unmask] \gets \frac{\std^2 _\tk{k+1}}{\std^2 _\tk{k+1} + \lambda \std^2 _\obs \acp^2 _\tk{k+1}} \obs + \frac{\lambda \std^2 _\obs \acp^2 _\tk{k+1}}{\std^2 _\tk{k+1} + \lambda \std^2 _\obs \acp^2 _\tk{k+1}} \hat\bx_0[\unmask]$} \label{line:diffpir-opt}
            \STATE $\hat\bx_1 \gets (\bx - \acp_\tk{k+1} \hat\bx_0) / \std _\tk{k+1}$
            \STATE $\bw \sim \gauss(\zero_\dimx, \Id_\dimx)$
            \STATE $\bx \gets \acp_\tk{k} \hat\bx_0 + (\std^2 _\tk{k} - \eta^2 _k)^{1/2} \hat\bx_1 + \eta_k \bw$ \label{line:diffpir-ddim}
        \ENDFOR
        \STATE {\bfseries Return:} $\bx$
    \end{algorithmic}
    \label{algo:diffpir}
\end{algorithm}
\paragraph{Further related methods.} 
Here we continue our discussion of VJP-free methods. The \daps\ algorithm \citep{zhang2025improving} proposes sampling, given the previous state $X_\tk{k+1}$, a clean state $\hat{X}_0$ by performing Langevin Monte Carlo steps on the posterior distribtion $\smash{\post{0\tbar \tk{k+1}}{X_\tk{k+1}, \obs}{}}$. This step is performed approximately by replacing the prior transition $\smash{\pdata{0\tbar \tk{k+1}}{X_\tk{k+1}}{}}$ with a Gaussian approximation centered at the denoiser $\smash{\denoiser{\tk{k+1}}{}{X_\tk{k+1}}[\param]}$. Then, given $\hat{X}_0$, the next state $X_\tk{k}$ is drawn from $\smash{\gauss(\acp_\tk{k} \hat{X}_0, \std^2 _\tk{k} \Id_\dimx)}$.

One important aspect of our method is that we circumvent differentiation through the denoiser but also the decoder, as the diffusion models we consider operate in the latent space. We do so by downsampling the mask into the latent space. In contrast, the recent work of \citet{spagnoletti2025latino} also circumvents differentiation through the denoiser, but does so by lifting the latent states into pixel space and optimizing the likelihood there. The result of the optimization is then projected back into the latent space and then undergoes back-and-forth noise-denoising steps. 

Finally, several recent works \citep{mardani2024a, zilberstein2025repulsive, erbach2025solving} adopt a variational perspective: the target distribution is approximated by a Gaussian distribution whose parameters are iteratively estimated by minimizing a combination of an observation-fidelity loss and a score-matching-like loss.

\emph{VJP-based methods.} A broad class of zero-shot approaches builds on the guidance approximation \eqref{eq:dps} to estimate $\nabla_{\bx_t} \log \pot{t}{\bx_t}$. \citet{song2022pseudoinverse} approximate $\pdata{0\tbar t}{}{}$ by a Gaussian with mean $\denoiser{t}{}{}[\param]$ and a tuned covariance. For the inpainting setting in \eqref{eq:inpainting-likelihood}, plugging this approximation into \eqref{eq:guidance-term} yields an integral that can be computed in a closed form, providing a proxy for $\pot{t}{\cdot}$. Several works exploit the link between the covariance of $\pdata{0\tbar t}{\bx_t}{\cdot}$ and the Jacobian of the denoiser \citep{meng2021estimating}. This observation underpins the methods of \citet{finzi2023user}, \citet{stevens2023removing}, and \citet{boys2023tweedie}, which derive likelihood scores by estimating or inverting the Jacobian. These approaches require solving large linear systems and backpropagating through the denoiser, both computationally expensive operations. To reduce cost, these works   assume a locally constant Jacobian around $\bx_t$, but updates still involve either explicit matrix inversion or repeated VJPs. In practice, diagonal approximations based on row sums are commonly used to approximate the covariance matrix \citep{boys2023tweedie}, or conjugate gradient methods are employed to circumvent the need for full matrix inversion \citep{rozet2024learning}. For general likelihoods $\pot{0}{\cdot}$, \citet{song2023loss} combine the Gaussian posterior model of \citet{song2022pseudoinverse} with Monte Carlo sampling to approximate $\pot{t}{\cdot}$. In the latent setting, \citet{rout2024solving} apply the DPS approximation jointly with a regularizer that encourages latent variables to remain near encoder–decoder fixed points.  Other methods modify the sampling dynamics. \citet{moufad2025variational} propose a two-stage procedure: the chain is first moved to an earlier time $\ell \ll \tk{k}$, where the DPS approximation is applied to sample from an approximate conditional at $\ell$, before returning to $\tk{k}$ via additional noising steps. \citet{janati2025mgdm} incorporate a related idea into a Gibbs sampling framework.  Overall, these methods remain fundamentally VJP-based and inherit substantial memory and runtime overhead from repeated backpropagation through the denoiser. By contrast, our decoupled guidance relies exclusively on forward denoiser evaluations and closed-form Gaussian updates, thereby eliminating VJPs entirely while retaining competitive performance.

For a complete review of zero-shot posterior sampling methods see \cite{daras2024survey,janati2025bridging,chung2025diffusion}.

\subsection{Behavior under increased runtime.}
We extend the ablation study in \Cref{subsec:ablation} by examining the behavior of \algo\  when the number of NFEs is increased. Specifically, we vary the budget from 20 to 500 NFEs on the DIV2K dataset and report results across different masking patterns; see \Cref{fig:runtime}. All metrics improve steadily as the budget grows, reaching their best values around 200 NFEs ($10$s runtime). Beyond this point, performance saturates and exhibits a slight degradation at 500 NFEs. These results suggest that our default DDIM schedule is well suited to low and mid-NFE regimes---which are most relevant for practical settings---but may not be fully optimized for larger budgets.
\begin{figure}[H]
    \centering
    \includegraphics[width=.80\textwidth]{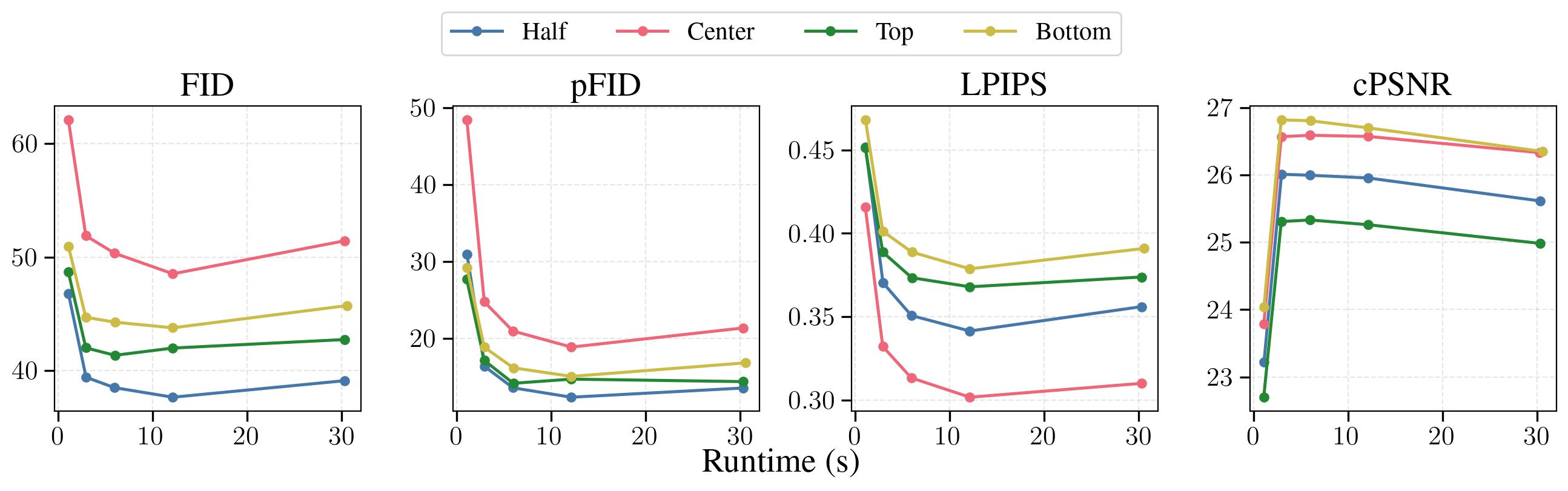}
    \captionsetup{font=small}
    \caption{Performance of DING on DIV2K under varying NFE budgets (20 to 500) across different masking patterns. Runtimes are measured on a H100 GPU.}
    \label{fig:runtime}
\end{figure}

\subsection{Limitation}
\rebuttal{We observed that the quality of reconstructions is highly sensitive to the specificity of the textual prompt. When the prompt is under-specified or lacks sufficient semantic detail, the resulting samples may exhibit reduced coherence, particularly in large masked regions where contextual consistency is critical. This issue manifests as mismatched textures or backgrounds, or inconsistent object boundaries, even when the visible area is faithfully preserved. To illustrate this behavior, we compare reconstructions obtained with well-defined prompts against those generated using vague or ambiguous ones. Examples are provided in Figure~\ref{fig:limitation1} and \ref{fig:limitation2}.}
\begin{figure}
    \centering 
    \includegraphics[width=0.9\textwidth]{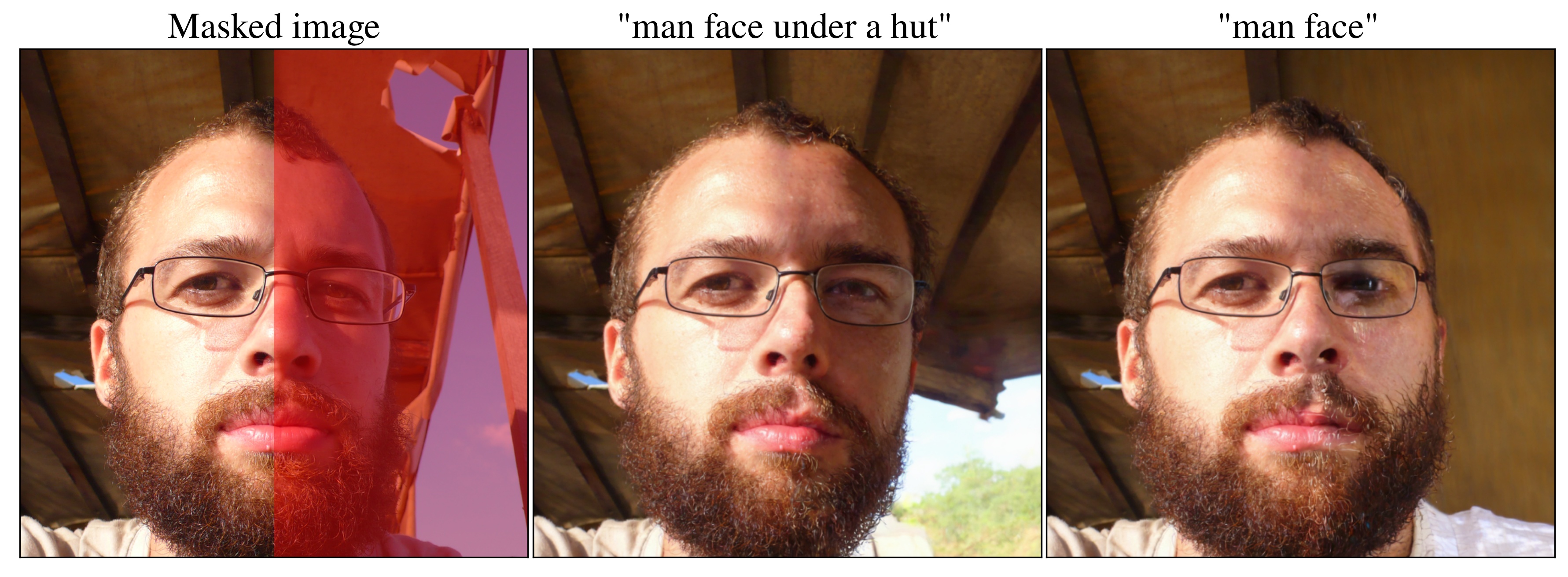}
    \includegraphics[width=0.9\textwidth]{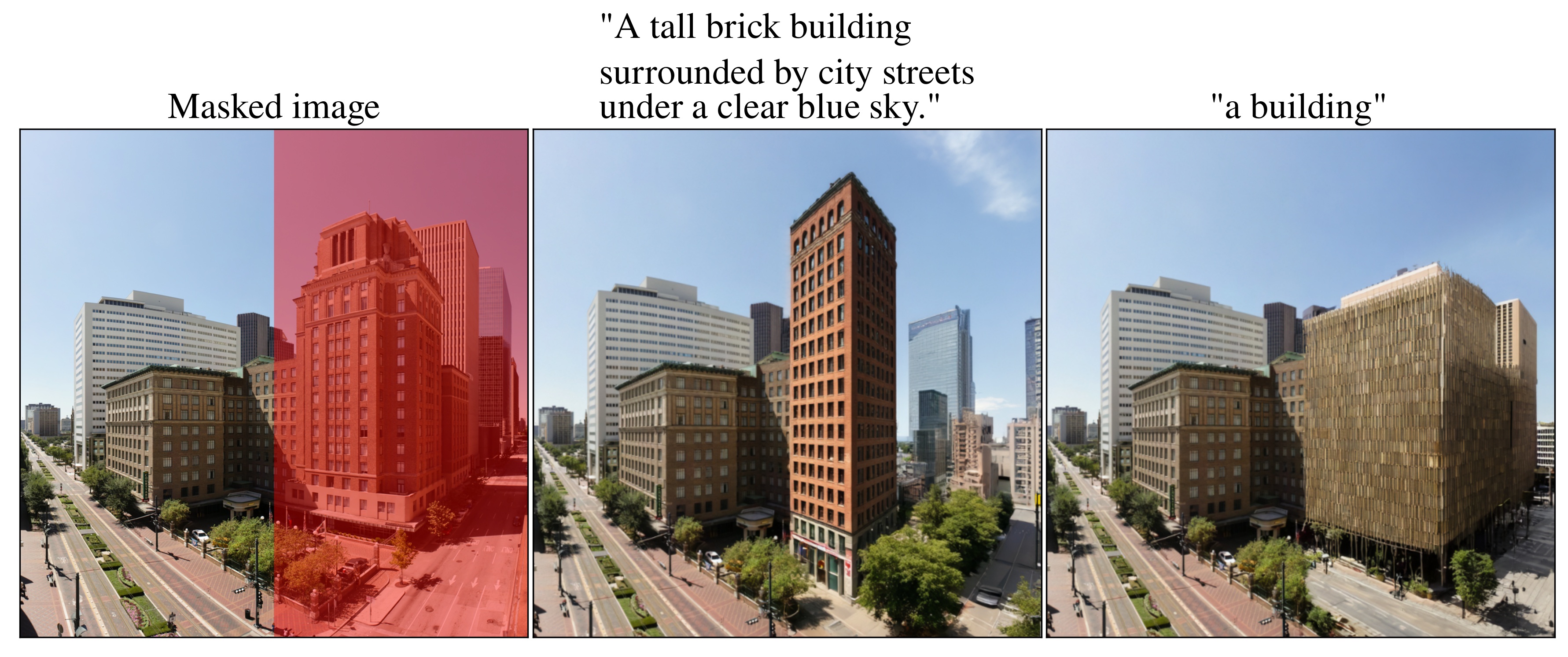}
    \caption{Effect of prompt precision on inpainting quality.}
    \label{fig:limitation1}
\end{figure}
\begin{figure}
    \centering
    \includegraphics[width=0.9\textwidth]{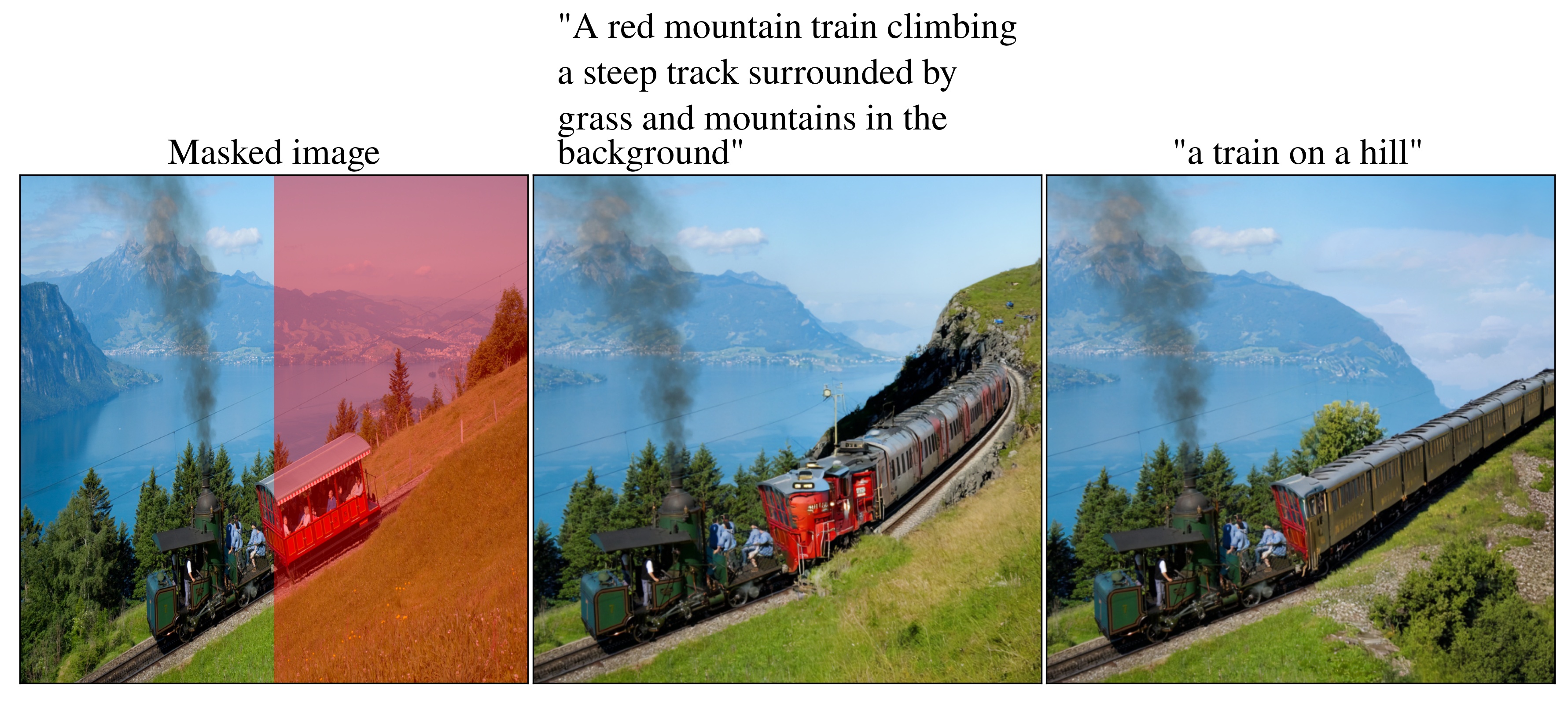}
    \includegraphics[width=0.9\textwidth]{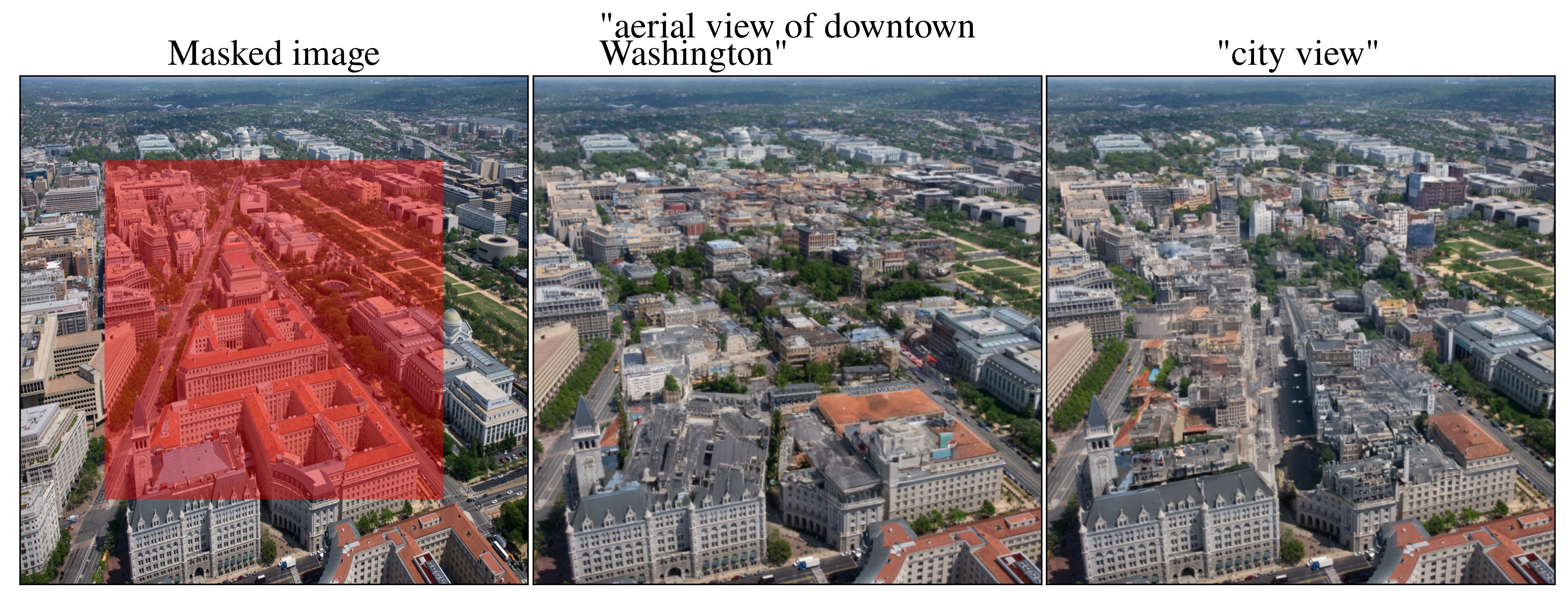}
    \caption{Effect of prompt precision on inpainting quality.}
    \label{fig:limitation2}
\end{figure}
\subsection{Bias in Gaussian case}
\rebuttal{For the sake of simplicity we assume that $\pdata{0}{}{} \eqdef \gauss(\zero_\dimx, \Sigma)$ where $\Sigma$ is a covariance matrix. We also write the likelihood as $\pot{0}{\bx_0} = \normpdf(\obs; P_{\unmask} \bx_0, \sigma^2 _\obs \Id_\dimobs)$ where $P_\unmask \in \rset^{\dimobs \times \dimx}$ is the matrix satisfying $P_{\unmask} \bx = \bx[\unmask]$. Define $D_t \eqdef \acp_t \Sigma(\acp^2 _t \Sigma + \sigma^2 _t \Id_\dimx)^{-1}$. Then, the denoiser and noise predictors are given by 
\begin{align*}
\denoiser{t}{}{\bx_t} = D_t \bx_t \eqsp, \quad 
\noisepred{t}{}{\bx_t} = \sigma^{-1} _t \left( \Id_\dimx - \acp_t D_t \right) \bx_t \eqsp.
\end{align*}
We consider hereafter the DDIM transitions 
$
\pdata{s\tbar t}{\bx_t}{\bx_t}[\ddimstd] \eqdef \normpdf(\bx_s; \bmu_{s\tbar t}(\bx_t; \ddimstd), \ddimstd^2 _s \Id_\dimx) 
$
where 
$$
    \bmu_{s\tbar t}(\bx_t; \ddimstd) \eqdef \acp_s \denoiser{t}{}{\bx_t} + \sqrt{\sigma^2 _s - \ddimstd^2 _s} \noisepred{t}{}{\bx_t} 
$$
In this section we analyze the bias of the \algo\ one-step transition relative to the posterior transition involving the DPS likelihood \eqref{eq:dps}; \emph{i.e.} we compare the transition 
\begin{equation}
\hpost{s\tbar t}{\bx_t, \obs}{\bx_s}[\ding]
\;\eqdef\; \pE \left[ \hpost{s\tbar t}{Z_s, \bx_t, \obs}{\bx_s}[\ding]  \right] \eqsp,
\end{equation}
where $Z_s \sim \pdata{s\tbar t}{\bx_t}{}[\ddimstd]$ and 
$$
\hpost{s\tbar t}{\bz_s, \bx_t, \obs}{\bx_s}[\ding] \propto \pot{0}{\frac{\bx_s - \sigma_s \noisepred{s}{}{\bz_s}}{\acp_s}} \pdata{s\tbar t}{\bx_t}{\bx_s}[\ddimstd]
$$  against 
\begin{equation*}
    \hpost{s\tbar t}{\bx_t, \obs}{\bx_s}[\mathsf{dps}] \propto \pot{0}{\denoiser{s}{}{\bx_s}} \pdata{s\tbar t}{\bx_t}{\bx_s}[\ddimstd] \eqsp.
\end{equation*}}
\rebuttal{
We define $M \eqdef P_\unmask^\top P_\unmask$, which is an orthogonal projection matrix since $M^\top = M$, $P_\unmask P_\unmask^\top = \Id_\dimobs$, and thus $M^2 = M$. 
We also introduce the quantity 
\[
\varepsilon_s \eqdef \| (D_s^\top - \acp_s^{-1} \Id_\dimx) M \|_{\mathrm{op}},
\]
which quantifies how far the Jacobian of the denoiser $\denoiser{s}{}{}$ deviates from the Jacobian of the \algo\ denoiser approximation \emph{on the observed coordinates}. 
In the following proposition, we characterize the asymptotic behavior of the DPS and \algo\ posterior transition means and covariances as $\ddimstd_s \to 0$, and we express the mean bias in terms of $\varepsilon_s$. 
In \Cref{prop:upperbound_eps}, we also provide an explicit upper bound on $\varepsilon_s$ in terms of the schedule and the minimum eigenvalue of the prior covariance $\Sigma$.}
\begin{proposition}
\rebuttal{Both $\hpost{s\tbar t}{\bx_t,\obs}{}[\mathsf{dps}]$ and $\hpost{s\tbar t}{\bx_t,\obs}{}[\ding]$ are Gaussian distributions with mean and covariance respectively $(\bmu^\dps _{s\tbar t}(\bx_t, \obs), \Sigma^\dps _{s\tbar t})$ and $(\bmu^\ding _{s\tbar t}(\bx_t, \obs), \Sigma^\ding _{s\tbar t})$ satisfying 
\begin{align*} 
    \| \Sigma^\dps _{s\tbar t} - \Sigma^\ding _{s\tbar t} \| = \bigO(\ddimstd^4 _s)
\end{align*}
and 
$$
    \| \bmu^\dps _{s\tbar t}(\bx_t, \obs) - \bmu^\ding _{s\tbar t}(\bx_t, \obs) \| = \bigO\bigg(\ddimstd^2 _s \big(\varepsilon_s (\| \obs \| + \| M \bmu_{s\tbar t}(\bx_t; \ddimstd) \| \big) + \varepsilon^2 _s \| \bmu_{s\tbar t}(\bx_t; \ddimstd) \| \big) \bigg) \eqsp. 
$$
as $\ddimstd_s \rightarrow 0$.}
\end{proposition}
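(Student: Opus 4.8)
The plan is to use that, in the Gaussian model, every ingredient is linear--Gaussian, so both transitions have exact Gaussian closed forms, and then to Taylor-expand their means and covariances in $\ddimstd_s$ around $0$. Concretely, $\denoiser{s}{}{\bx_s} = D_s \bx_s$ and $\noisepred{s}{}{\bx_s} = \std_s^{-1}(\Id_\dimx - \acp_s D_s)\bx_s$ are linear in their argument and $\pot{0}{\cdot}$ is Gaussian in its argument, so $\bx_s \mapsto \pot{0}{D_s\bx_s}$ is an unnormalized Gaussian; its product with the Gaussian DDIM transition $\pdata{s\tbar t}{\bx_t}{\bx_s}[\ddimstd]$ is Gaussian, and Gaussian conjugacy \citep[Eq.~2.116]{bishop2006pattern} gives $\Sigma^\dps _{s\tbar t} = (\ddimstd_s^{-2}\Id_\dimx + \std_\obs^{-2} D_s^\top M D_s)^{-1}$ and $\bmu^\dps _{s\tbar t}(\bx_t,\obs) = \Sigma^\dps _{s\tbar t}(\ddimstd_s^{-2}\bmu + \std_\obs^{-2} D_s^\top P_{\unmask}^\top \obs)$, where I write $\bmu \eqdef \bmu_{s\tbar t}(\bx_t; \ddimstd)$. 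For \algo, the same argument conditionally on $\bz_s$ shows $\hpost{s\tbar t}{\bz_s, \bx_t, \obs}{}[\ding]$ is Gaussian with covariance $\Sigma^\ding _{\mathrm{cond}} = (\ddimstd_s^{-2}\Id_\dimx + (\acp_s^2\std_\obs^2)^{-1} M)^{-1}$ \emph{not depending on} $\bz_s$, and mean $\bmu^\ding _{\mathrm{cond}}(\bz_s) \eqdef \Sigma^\ding _{\mathrm{cond}}(\ddimstd_s^{-2}\bmu + (\acp_s^2\std_\obs^2)^{-1}P_{\unmask}^\top \tilde{\obs}_s(\bz_s))$, \emph{affine} in $\bz_s$ through the effective observation $\tilde{\obs}_s(\bz_s) = \acp_s\obs + \std_s P_{\unmask}\noisepred{s}{}{\bz_s}$. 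Since $\bmu^\ding _{\mathrm{cond}}$ is affine, $\Sigma^\ding _{\mathrm{cond}}$ constant, and $Z_s \sim \pdata{s\tbar t}{\bx_t}{}[\ddimstd]$ is Gaussian with $\pE[Z_s] = \bmu$, the mixture \eqref{eq:algo-transition} is again Gaussian, with mean $\bmu^\ding _{s\tbar t}(\bx_t,\obs) = \bmu^\ding _{\mathrm{cond}}(\bmu)$ (the affine map commutes with $\pE$) and covariance $\Sigma^\ding _{s\tbar t} = \Sigma^\ding _{\mathrm{cond}} + \mathrm{Cov}\big(\bmu^\ding _{\mathrm{cond}}(Z_s)\big)$ by the law of total covariance.

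\paragraph{Covariance term.}
Next I would expand by the Neumann series: for $\ddimstd_s$ small, $\Sigma^\dps _{s\tbar t} = \ddimstd_s^2\Id_\dimx - \ddimstd_s^4 \std_\obs^{-2} D_s^\top M D_s + \bigO(\ddimstd_s^6)$ and $\Sigma^\ding _{\mathrm{cond}} = \ddimstd_s^2\Id_\dimx - \ddimstd_s^4(\acp_s^2\std_\obs^2)^{-1}M + \bigO(\ddimstd_s^6)$. The only $\bz_s$-dependent part of $\bmu^\ding _{\mathrm{cond}}(\bz_s)$ has the form $\Sigma^\ding _{\mathrm{cond}} A \bz_s$ for a matrix $A$ of norm $\bigO(1)$ in $\ddimstd_s$, so $\mathrm{Cov}(\bmu^\ding _{\mathrm{cond}}(Z_s)) = \ddimstd_s^2\, \Sigma^\ding _{\mathrm{cond}} A A^\top \Sigma^\ding _{\mathrm{cond}} = \bigO(\ddimstd_s^6)$, carrying two factors of $\Sigma^\ding _{\mathrm{cond}} = \bigO(\ddimstd_s^2)$ and one of $\mathrm{Cov}(Z_s) = \ddimstd_s^2\Id_\dimx$. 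Hence $\Sigma^\ding _{s\tbar t} = \ddimstd_s^2\Id_\dimx + \bigO(\ddimstd_s^4)$ as well; the leading $\ddimstd_s^2\Id_\dimx$ cancels, giving $\|\Sigma^\dps _{s\tbar t} - \Sigma^\ding _{s\tbar t}\| = \bigO(\ddimstd_s^4)$.

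\paragraph{Mean term.}
Expanding both means to order $\ddimstd_s^2$ gives $\bmu^\dps _{s\tbar t}(\bx_t,\obs) = \bmu + \ddimstd_s^2 \std_\obs^{-2} D_s^\top P_{\unmask}^\top(\obs - P_{\unmask} D_s\bmu) + \bigO(\ddimstd_s^4)$ and $\bmu^\ding _{s\tbar t}(\bx_t,\obs) = \bmu + \ddimstd_s^2 (\acp_s\std_\obs^2)^{-1} P_{\unmask}^\top(\obs - P_{\unmask} D_s\bmu) + \bigO(\ddimstd_s^4)$, whence the bias is $\ddimstd_s^2\std_\obs^{-2}(D_s^\top - \acp_s^{-1}\Id_\dimx) P_{\unmask}^\top(\obs - P_{\unmask} D_s\bmu) + \bigO(\ddimstd_s^4)$. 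The key algebraic move is to route the operator $D_s^\top - \acp_s^{-1}\Id_\dimx$ through the projection $M = P_{\unmask}^\top P_{\unmask}$: since $P_{\unmask}P_{\unmask}^\top = \Id_\dimobs$ one has $P_{\unmask}^\top = M P_{\unmask}^\top$, so $(D_s^\top - \acp_s^{-1}\Id_\dimx)P_{\unmask}^\top = (D_s^\top - \acp_s^{-1}\Id_\dimx) M\, P_{\unmask}^\top$, of operator norm at most $\varepsilon_s$; transposing the matrix in the definition of $\varepsilon_s$ also gives $\|M(D_s - \acp_s^{-1}\Id_\dimx)\|_{\mathrm{op}} = \varepsilon_s$. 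Then $\|\obs - P_{\unmask} D_s\bmu\| \le \|\obs\| + \|P_{\unmask} D_s\bmu\| = \|\obs\| + \|M D_s\bmu\|$ (using $\|P_{\unmask} w\| = \|M w\|$), and $\|M D_s\bmu\| \le \|M(D_s - \acp_s^{-1}\Id_\dimx)\bmu\| + \acp_s^{-1}\|M\bmu\| \le \varepsilon_s\|\bmu\| + \acp_s^{-1}\|M\bmu\|$. Collecting the factors, $\|\bmu^\dps _{s\tbar t}(\bx_t,\obs) - \bmu^\ding _{s\tbar t}(\bx_t,\obs)\| \le \ddimstd_s^2 \std_\obs^{-2}\varepsilon_s\big(\|\obs\| + \varepsilon_s\|\bmu\| + \acp_s^{-1}\|M\bmu\|\big) + \bigO(\ddimstd_s^4)$, which is the claimed estimate once the fixed constants $\std_\obs^{-2}$, $\acp_s^{-1}$ and the $\bigO(\ddimstd_s^4)$ remainder are absorbed into the $\bigO$.

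\paragraph{Main obstacle.}
No single estimate is difficult; the work is in organizing the computation so that it produces \emph{exactly} this form. Three points need care: (i) showing the \algo\ mixture is Gaussian and extracting its covariance from the law of total covariance, while verifying that the extra variance term is genuinely $\bigO(\ddimstd_s^6)$ and hence invisible at order $\ddimstd_s^4$; (ii) inserting the projection $M$ in the mean bias so that the leading coefficient is governed by $\varepsilon_s$ rather than by the possibly large $\|D_s\|_{\mathrm{op}}$; and (iii) splitting $\|P_{\unmask} D_s\bmu\|$ into an $\bigO(\varepsilon_s\|\bmu\|)$ part plus an $\bigO(\|M\bmu\|)$ part, which is what generates the $\varepsilon_s^2\|\bmu\|$ cross-term. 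I would also double-check that the $\bigO(\ddimstd_s^4)$ remainders, whose coefficients depend on $\bmu$ and $\obs$, do not disturb the leading $\ddimstd_s^2$ behaviour being claimed.
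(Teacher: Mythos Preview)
Your proposal is correct and follows essentially the same approach as the paper: Gaussian conjugacy for the closed forms, law of total covariance to show the \algo\ mixture is Gaussian, Neumann expansion in $\ddimstd_s$, and the insertion of $M$ to convert the leading mean-bias coefficient into $\varepsilon_s$. The only cosmetic difference is that the paper first expands the order-$\ddimstd_s^2$ mean difference into the three separate terms $-E_s^\top P_{\unmask}^\top\obs$, $\acp_s^{-1}E_s^\top M\bmu$, $E_s^\top M E_s\bmu$ (with $E_s \eqdef D_s - \acp_s^{-1}\Id_\dimx$) and bounds each, whereas you keep the factored form $E_s^\top P_{\unmask}^\top(\obs - P_{\unmask} D_s\bmu)$ and then split $D_s = E_s + \acp_s^{-1}\Id_\dimx$ inside; these are algebraically the same decomposition.
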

\begin{proof} 
\rebuttal{Using the standard Gaussian conjugation formula \cite[equation 2.116]{bishop2006pattern}, we have that $\hpost{s\tbar t}{\bx_t, \obs}{\bx_s}[\mathsf{dps}] = \normpdf(\bx; \mathbf{m}^\dps _{s\tbar t}(\bx_t, \obs), \Sigma^\dps _s)$ with 
\begin{align*}
    \mathbf{m}^\dps _{s\tbar t}(\bx_t, \obs) & \eqdef \Sigma^\dps _{s\tbar t}(\ddimstd^{-2} _s \bmu_{s\tbar t}(\bx_t; \ddimstd) + \std^{-2} _\obs D^\top _s P^\top _\unmask \obs) \eqsp, \\
    \Sigma^\dps _{s\tbar t} & \eqdef \big(\ddimstd^{-2} _s \Id_\dimx + \std^{-2} _\obs (P_\unmask D_s)^\top P_\unmask D_s\big)^{-1} \eqsp.
\end{align*}
Next, for the \algo\ transition, first set 
$b_s(Z_s)\eqdef -(\std_s/\acp_s) P_{\unmask}\,\noisepred{s}{}{Z_s}$.  Gaussian conjugacy with
$\pdata{s\tbar t}{\bx_t}{\bx_s}[\ddimstd] = \gauss(\bx_s;\bmu_{s\tbar t}(\bx_t;\ddimstd),\ddimstd^2 _s  \Id_\dimx)$
gives
$$
\hpost{s\tbar t}{Z_s,\bx_t,\obs}{\bx_s}[\ding]
=\normpdf\big(\bx_s; \widetilde\Sigma^{\ding}_{s\tbar t}\big(\ddimstd_s^{-2}\,\bmu _{s\tbar t}(\bx_t;\ddimstd)+ \std_\obs^{-2} \acp^{-1} _s P^\top _\unmask \big(\obs - b_s(Z_s)\big)
\big), \widetilde\Sigma^{\ding} _{s\tbar t}
\Big),
$$
and $\widetilde\Sigma^{\ding} _{s\tbar t} \eqdef \big(\ddimstd_s^{-2}\Id_\dimx+\acp_s^{-2}\std_\obs^{-2} P_{\unmask}^\top P_{\unmask}\big)^{-1} \eqsp.$ Since the mean of this conditional distribution is clearly affine in $Z_s$, we integrate it out, yielding 
that $\hpost{s\tbar t}{\bx_t, \obs}{\bx_s}[\ding] = \normpdf(\bx_s; \mathbf{m}^\ding _{s\tbar t}(\bx_t, \obs), \Sigma^\ding _{s\tbar t})$ where 
\begin{align*} 
    \mathbf{m}^\ding _{s\tbar t}(\bx_t, \obs) & \eqdef \widetilde\Sigma^{\ding}_{s\tbar t}\big(\ddimstd_s^{-2}\,\bmu _{s\tbar t}(\bx_t;\ddimstd) + \std_\obs^{-2} \acp^{-1} _s P^\top _\unmask \obs \\
    & \hspace{2cm} + (\std_\obs^{-2} \acp^{-2} _s)  P_\unmask^\top P_\unmask (\Id_\dimx - \acp_s D_s) \bmu _{s\tbar t}(\bx_t;\ddimstd) \big) \eqsp, \\
    \Sigma^\ding _{s\tbar t} & \eqdef \widetilde\Sigma^{\ding}_{s\tbar t} +   \frac{\ddimstd^2 _s}{\std_\obs^{4} \acp^4 _s} \widetilde\Sigma^{\ding}_{s\tbar t} P^\top _\unmask P _\unmask (\Id_\dimx - \acp_s D_s) (\widetilde\Sigma^{\ding}_{s\tbar t} P^\top _\unmask P _\unmask (\Id_\dimx - \acp_s D_s))^\top \eqsp.
\end{align*}
}
\paragraph{Small-noise regime.}
\rebuttal{We now study the behavior of both transitions when the DDIM kernel variance $\eta_s^2$ tends  to zero. For simplicity we define 
$$ 
    K_\dps \eqdef \std^{-2} _\obs D^\top _s P^\top _\unmask P_\unmask D_s \eqsp, \quad K_\ding \eqdef \acp^{-2} _s \std^2 _\obs M 
$$ 
and $R_s = \Id_\dimx - \acp_s D_s$, $M = P^\top _\unmask P _\unmask$. Then, 
\begin{align*}
\Sigma^{\dps}_{s|t} & = (\eta_s^{-2} \Id_d + K_\dps)^{-1} \eqsp, \\
\Sigma^{\ding}_{s|t} & = (\eta_s^{-2} \Id_d + K_\ding)^{-1} + \frac{\eta_s^2}{\alpha_s^4 \sigma_y^4}(\eta_s^{-2} \Id_d + K_\ding)^{-1} M R_s R_s^\top M (\eta_s^{-2} \Id_d + K_\ding)^{-1}.
\end{align*}
We use throughout that for any fixed matrix $K$, we have that when $\eta_s^2 \|K\|_{\mathrm{op}} < 1$,
\begin{equation}
(\eta_s^{-2} \Id_\dimx + K)^{-1} = \eta_s^2 (\Id_\dimx - \eta_s^2 K) + R_2(\eta_s),
\qquad
\|R_2(\eta_s)\| \le \eta_s^6 \frac{\|K\|^2 _{\mathrm{op}}}{1 - \eta_s^2 \|K\|_{\mathrm{op}}}.
\label{eq:neumann}
\end{equation}
This follows from the standard Neumann (geometric) series expansion. Applying~\eqref{eq:neumann} with $\eta^2 _s \leq \min(1 / \| K_\dps \|_{\mathrm{op}}, 1 / \| K_\ding \|_{\mathrm{op}})$, we get 
\begin{align*}
\Sigma^{\dps}_{s\tbar t}  &= \ddimstd_s^2 (\Id_d - \ddimstd_s^2 K_{\dps}) + \bigO(\ddimstd_s ^6), \\
\Sigma^{\ding}_{s\tbar t} &= \ddimstd_s^2 (\Id_d - \ddimstd_s^2 K_{\ding}) + \frac{\ddimstd_s^6}{\acp_s^4 \std_y^4} M R_s R_s^\top M + \bigO(\ddimstd_s^6).
\end{align*}
and thus 
$$ 
 \Sigma^{\dps}_{s\tbar t} - \Sigma^{\ding}_{s\tbar t} = \bigO(\ddimstd^4 _s) \eqsp. 
$$ 
Plugging these expansions in the mean terms, we find that 
\begin{align*}
\mathbf{m}^{\dps}_{s\tbar t}(\bx_t, \obs) &= \bmu_{s\tbar t}(\bx_t; \ddimstd) + \eta_s^2 (\sigma_y^{-2} D_s^\top P_\unmask ^\top \obs - K_{\dps} \bmu_{s\tbar t}(\bx_t; \ddimstd)) + \bigO(\ddimstd_s^4), \\
\mathbf{m}^{\ding}_{s\tbar t}(\bx_t, \obs) & = \bmu_{s\tbar t}(\bx_t; \ddimstd) + \ddimstd_s^2 (\acp_s^{-1} \std_y^{-2} P_\unmask ^\top \obs
+ \acp_s^{-2} \std_y^{-2} M R_s \bmu_{s\tbar t}
- K_{\ding} \bmu_{s\tbar t}(\bx_t; \ddimstd)) + \bigO(\ddimstd_s^4).
\end{align*}
}
\rebuttal{This yields  
\begin{multline*}
 \mathbf{m}^{\dps}_{s\tbar t}(\bx_t, \obs) - \mathbf{m}^{\ding}_{s\tbar t}(\bx_t, \obs) = 
\eta_s^2 \sigma_y^{-2}
\big[
(\alpha_s^{-1} \Id_d - D_s^\top) P_\unmask ^\top \obs
\\ + \alpha_s^{-2} M R_s \bmu_{s\tbar t}(\bx_t; \ddimstd)
- (\alpha_s^{-2} M - D_s^\top M D_s) \bmu_{s\tbar t}(\bx_t; \ddimstd)
\big] + \bigO(\eta_s^4),
\end{multline*}
We now proceed to further upper bound the leading term. Define $E_s \eqdef D_s - \acp^{-1} _s \Id_\dimx$. Then $R_s = - \acp_s E_s$ and we have that
\begin{multline*}
     \mathbf{m}^{\dps}_{s\tbar t}(\bx_t, \obs) - \mathbf{m}^{\ding}_{s\tbar t}(\bx_t, \obs) = \\ \ddimstd^2 _s \std^{-2} _\obs \big( -E^\top _s P^\top _\unmask \obs + \acp^{-1} _s E^\top _s M \bmu_{s\tbar t}(\bx_t; \ddimstd) + E^\top _s M E_s \bmu_{s\tbar t}(\bx_t; \ddimstd) \big) + \bigO(\ddimstd^4 _s) \eqsp.
\end{multline*}
with $M=P_\unmask^\top P_\unmask$, which is an orthogonal projection matrix since $M^\top = M$ and $P_\unmask P^\top _\unmask = \Id_\dimobs$ and thus $M^2 = M$. We proceed by bouding each term of 
$$
-E^\top _s P^\top _\unmask \obs + \acp^{-1} _s E^\top _s M \bmu_{s\tbar t}(\bx_t; \ddimstd) + E^\top _s M E_s \bmu_{s\tbar t}(\bx_t; \ddimstd)
$$
separately. Define
$
\varepsilon_s \eqdef \|M E_s\|_{\mathrm{op}}.
$ Then, since $\bv \eqdef P_\unmask^\top \obs \in \mathrm{range}(M)$, we have $M \bv = \bv$. Hence
$$
E_s^\top P_\unmask^\top y = E_s^\top M P_\unmask^\top y = (M E_s)^\top (M P_\unmask^\top \obs) 
$$
where we have used that $M^{\top} M = M$. 
By the operator norm inequality, and the fact that $\|P_\unmask^\top \obs\| = \|\obs\|$, we get 
\[
\|E_s^\top P_\unmask^\top \obs\|
\;\le\; \|M E_s\|_{\mathrm{op}}\, \|M P_\unmask^\top \obs\|
= \varepsilon_s \|P_\unmask^\top \obs\| = \varepsilon_s \| \obs\|.
\]
Next, using the same operator norm inequality we get that 
$$
\|E^\top _s M \bmu_{s\tbar t}(\bx_t; \ddimstd) \| \leq \varepsilon_s \| M \bmu_{s\tbar t}(\bx_t; \ddimstd) \| \eqsp, \quad \| E^\top _s M E_s \bmu_{s\tbar t}(\bx_t; \ddimstd) \| \leq \varepsilon^2 _s \| \bmu_{s\tbar t}(\bx_t; \ddimstd) \| \eqsp.
$$
which yields the desired bound. }
\end{proof} 
\begin{proposition}[Upperbound on $\varepsilon_s$]
    \label{prop:upperbound_eps}
\rebuttal{We have that 
$$
\varepsilon_s \leq  \frac{\sigma_s^2}{\alpha_s}\, \frac{1}{\alpha_s^2\,\lambda_{\min}(\Sigma)+\sigma_s^2}
$$
where $\lambda_{\min}(\Sigma)$ is the smallest eigenvalue of $\Sigma$. }
\end{proposition}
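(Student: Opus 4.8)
The plan is to reduce the statement to an explicit closed form for $E_s \eqdef D_s - \acp_s^{-1}\Id_\dimx$ and then apply submultiplicativity of the operator norm together with the spectral theorem for symmetric matrices.

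\textbf{Step 1: simplify $E_s$.} Write $A_s \eqdef \acp_s^2\Sigma + \sigma_s^2\Id_\dimx$, so that $D_s = \acp_s\Sigma A_s^{-1}$ by definition. Using $\acp_s^{-1}\Id_\dimx = \acp_s^{-1}A_s A_s^{-1} = \big(\acp_s\Sigma + \acp_s^{-1}\sigma_s^2\Id_\dimx\big)A_s^{-1}$, the two copies of $\acp_s\Sigma A_s^{-1}$ cancel and I obtain the closed form
\[
E_s \;=\; D_s - \acp_s^{-1}\Id_\dimx \;=\; -\,\frac{\std_s^2}{\acp_s}\,\big(\acp_s^2\Sigma + \sigma_s^2\Id_\dimx\big)^{-1}\eqsp.
\]

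\textbf{Step 2: peel off $M$ and bound the remaining inverse.} Since $M = P_\unmask^\top P_\unmask$ is an orthogonal projection (as already noted, $M^\top = M$ and $M^2 = M$, because $P_\unmask P_\unmask^\top = \Id_\dimobs$), it satisfies $\|M\|_{\mathrm{op}}\le 1$, so submultiplicativity gives $\varepsilon_s = \|M E_s\|_{\mathrm{op}} \le \|E_s\|_{\mathrm{op}}$. It remains to control $\|E_s\|_{\mathrm{op}} = (\std_s^2/\acp_s)\,\|(\acp_s^2\Sigma + \sigma_s^2\Id_\dimx)^{-1}\|_{\mathrm{op}}$. Because $\Sigma$ is a symmetric positive semidefinite covariance matrix, $\acp_s^2\Sigma + \sigma_s^2\Id_\dimx$ is symmetric with smallest eigenvalue $\acp_s^2\lambda_{\min}(\Sigma) + \sigma_s^2 > 0$ (for $s>0$, $\sigma_s>0$); hence, by the spectral theorem, its inverse has operator norm $1/(\acp_s^2\lambda_{\min}(\Sigma) + \sigma_s^2)$. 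Combining these estimates yields
\[
\varepsilon_s \;\le\; \frac{\std_s^2}{\acp_s}\cdot\frac{1}{\acp_s^2\,\lambda_{\min}(\Sigma)+\sigma_s^2}\eqsp,
\]
which is the claim.

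I do not anticipate a genuine obstacle: the argument is a short chain of elementary identities and norm inequalities, so the main ``work'' is just the cancellation in Step 1. The only mild slack is the inequality $\|M E_s\|_{\mathrm{op}} \le \|E_s\|_{\mathrm{op}}$, which discards any alignment between $\mathrm{range}(M)$ and the eigenvectors of $\Sigma$; one could tighten it by diagonalizing in a basis adapted to the observed coordinates, but the stated bound is exactly what is required downstream (and is already attained when $\Sigma$ acts isotropically on $\mathrm{range}(M)$).
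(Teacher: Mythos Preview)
Your proof is correct and follows essentially the same route as the paper: derive the closed form $E_s = -(\sigma_s^2/\alpha_s)(\alpha_s^2\Sigma+\sigma_s^2\Id_\dimx)^{-1}$, then use $\|M\|_{\mathrm{op}}\le 1$ and the spectral bound on the inverse. The only cosmetic difference is that the paper writes $\varepsilon_s = \|E_s^\top M\|_{\mathrm{op}}$ while you write $\|M E_s\|_{\mathrm{op}}$; these coincide because $E_s$ is symmetric (it is a scalar multiple of $(\alpha_s^2\Sigma+\sigma_s^2\Id_\dimx)^{-1}$) and the operator norm is invariant under transposition.
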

\begin{proof} 
\rebuttal{By noting that $(\acp^2 _s \Sigma + \std^2 _s \Id) E_s = - \acp^{-1} _s \std^2 _s \Id_\dimx$, we get the alternative expression 
$$ 
    E_s = - \frac{\std^2 _s}{\acp_s} (\acp^2 _s \Sigma + \std^2 _s \Id_\dimx)^{-1} \eqsp.
$$ 
By the submultiplicativity of the operator norm and the fact that $M$ is a non-trivial orthogonal projection matrix, we have that 
$$
    \| E^\top M \|_{\mathrm{op}} \leq \| E \|_{\mathrm{op}} = \frac{\std^2 _s}{\acp_s} \frac{1}{\lambda_{\min}(\acp^2 _s \Sigma + \std^2 _s \Id_\dimx)} \leq \frac{\std^2 _s}{\acp _s} \frac{1}{\acp^2 _s \lambda_{\min}(\Sigma) + \std^2 _s} \eqsp.
$$
}
\end{proof}

\section{Details about the experiments}
\label{sec:exp-details}
% \subsection{Datasets preprocessing}
% \paragraph{Downsampling images}
% \paragraph{Image captioning of \div2k.}

\subsection{Models}
We use both the SD 3 and SD 3.5 (medium) \citep{esser2024scaling} models with the linear schedule $\alpha_t = 1 - t$ and $\sigma_t = t$. 
% For the encoder-decoder we use the Tiny VAE TAESD3 \footnote{\url{https://huggingface.co/madebyollin/taesd3}}. 
In all the experiments we run the zero-shot methods with a guidance scale of $2$.  The fine-tuned baseline, which we refer to as SD3 Inpaint, is based on the publicly available model\footnote{\url{https://huggingface.co/alimama-creative/SD3-Controlnet-Inpainting}} trained for inpainting with a ControlNet-augmented version of Stable Diffusion 3. It has been finetuned on a large dataset of approximately 12 million $1024 \times 1024$ image–mask pairs to directly predict high-quality inpainted completions conditioned on the masked image and the mask itself. We have found the model to perform well also on lower resolutions, despite not undergoing multi-resolution training. Examples of image editing of lower resolution images are presented in the the HuggingFace page of the smae project. \rebuttal{We run this baseline using a guidance scale of $7$ for optimal results.}

Finally, all experiments use \texttt{bfloat16} for model forward passes \rebuttal{(and backward passes for baselines that require it)}, with other computations performed in \texttt{float32}.

\subsection{Mask downsampling}
To construct the mask in the latent space, we start from the original binary mask defined in pixel space. Since the encoder reduces spatial resolution by a fixed factor (here, 8), we downsample the pixel-space mask to match the resolution of the latent representation. This is done by applying bilinear interpolation with antialiasing. 
% effectively performing an average pooling over non-overlapping $8 \times 8$ pixel regions.
The resulting low-resolution mask captures the proportion of masked pixels within each latent receptive field. Finally, we threshold this downsampled mask at $0.95$ to obtain a binary latent mask, slightly overestimating the masked region to prevent boundary artifacts during sampling.

\subsection{Implementation of the baselines}

Here, we give implementation details of the baselines. We stress that \emph{each baseline is run in the latent space}, and thus no method computes the gradient \wrt\ the input of the decoder. We also manually tuned each baseline for the considered tasks. We provide the used hyperparameters in \Cref{tab:hyperparams-tasks}. 
% \tcr{TODO re mention that all algorithms are run on the latent space}

\paragraph{\rebuttal{\blended.}}
\rebuttal{%
We implemented \citet[Algorithm~1]{avrahami2023blendedlatent} following their official code\footnote{\url{https://github.com/omriav/blended-latent-diffusion}}.
The codebase includes an additional hyperparameter, \texttt{blending\_percentage}, which determines at what fraction of the inference steps blending begins. We set it to zero, as applying blending across all steps produced the best results.
A key detail is the original implementation is that the observed region (background) is re-noised to the noise level defined by the current timestep; see \citet[step 1-2 within the for loop in Algo 1]{avrahami2023blendedlatent}, yet the reconstructed region (foreground) has less noise as it comes from applying a DDIM transition. This causes the background and foreground to follow different noise levels, and hence, introduces minor artifacts in the final reconstructions.
We fixed this issue in our implementation by matching the two noise levels.
}

\paragraph{\daps.}
We adapt \citet[Algorithm 1]{zhang2025improving} based on the released code\footnote{\url{https://github.com/zhangbingliang2019/DAPS}} to the flow matching formulation. We found that using Langevin as MCMC sampler for enforcing data consistency works the best for low NFE regime.

\paragraph{\rebuttal{\diffpir.}}
\rebuttal{%
We make \citet[Algorithm 1]{zhu2023denoising} compatible with the flow matching formulation with step 4 being implemented in the case of mask operator.
We found in practice that the hyperparameter $\lambda$ has little impact on the quality of reconstructions and hence we use the recommended values $\lambda=1$\footnote{\url{https://github.com/yuanzhi-zhu/DiffPIR}}.
On the other hand for the second hyperparameter $\zeta$, we find that using $\zeta=0.3$ yielded the best reconstructions.
}

\paragraph{\rebuttal{\ddnm.}}
\rebuttal{%
We adapt the implementation in the released code\footnote{\url{https://github.com/wyhuai/DDNM}} to the flow matching formulation with the step 4 in \citet[Algorithm 3]{wang2023zeroshot} being implemented for a mask operator.
The official implementation uses a DDIM transition in step 5 of Algorithm 3 whose stochasticity is controlled by the hyperparemters $\eta$.
As recommended, we set the latter to $\eta=0.85$.
}

\paragraph{\flowchef\ \& \flowdps.}
For both algorithms, we adapt the implementations available in the released codes \flowchef\footnote{\url{https://github.com/FlowDPS-Inverse/FlowDPS}} \footnote{\url{https://github.com/FlowChef/flowchef}} to our codebase.
We observe that the two algorithms are quite similar, with \flowdps\ being distinct by adding stochasticity between iterations.

\paragraph{\pnpflow.}
We reimplement \citet[Algorithm 3]{martin2025pnpflow} while taking as a reference the released code\footnote{\url{https://github.com/annegnx/PnP-Flow}}.
For the stepsizes on data fidelity term, we find that a constant scheduler with higher stepsize enables the algorithm to fit the observation, mitigate the smooth and blurring effects in the reconstruction and hence yield better reconstructions.

\paragraph{\psld.}
We implement the \psld\ algorithm provided in \citet[Algorithm 2]{rout2024solving}.
We find that \psld\ algorithm requires several diffusion steps, e.g. at least 150 diffusion steps, to yield good results.
Unfortunately, we were not able to make it work well for the low NFE setup.

\paragraph{\reddiff.}
We implement \citet[Algorithm 1]{mardani2024a} based on the official code\footnote{\url{https://github.com/NVlabs/RED-diff}} and adapt it to the flow matching formulation.
We initialize the algorithm with a sample for a standard Gaussian.
For low NFE setups, we find that using a constant weight schedule yields better results, namely in terms fitting the observation and providing consistent reconstructions.

\paragraph{\resample.}
We reimplemented \citet[Algorithm 1]{song2024solving} based on the provided implementation details in \citet[Appendix]{song2024solving} and the reference code\footnote{\url{https://github.com/soominkwon/resample}}.
As noted in \citet{janati2025mgdm}, we set the tolerance $\varepsilon$ for optimizing the data consistency to the noise level $\std_{\obs}$.
Since we are working with low NEFs, we set the frequency at which hard data consistency is applied (skip step size) to $5$.
That aside, we found that the algorithm requires several diffusion steps (200) in order to output good enough reconstructions. We note that removing the DPS step in the data consistency steps reduces the quality of the reconstructions.

\begin{table}[h!]
\vspace{4mm}
\centering
\captionsetup{font=small}
\caption{Hyperparameters for each algorithm (using the same notations as in their paper) and task variations. “---” indicates identical across tasks.}
\resizebox{\textwidth}{!}{
\begin{tabular}{lclccccc}
\toprule
Algorithm & $n_{\text{steps}}$ & Base hyperparameters & \multicolumn{5}{c}{Latent tasks} \\
\cmidrule(lr){4-8}
& & & Half & Top & Bottom & Center & Strip \\

\midrule
% \rowcolor{rebuttalColor!30}
 
\blended & 50 & 
\scriptsize
\begin{tabular}[c]{@{}l@{}} 
\texttt{blending\_percentage} = 0
\end{tabular}
& \scriptsize --- & \scriptsize --- & \scriptsize --- & \scriptsize --- & \scriptsize --- \\

\midrule
\daps & 50 & \scriptsize
\begin{tabular}[c]{@{}l@{}} 
$N_{\text{ode}} = 2$ \\
$\texttt{MCMC steps} = 20$ \\
$\beta_y = 10^{-2}$ \\
$\texttt{Min ratio} = 0.43$ \\
$\texttt{MCMC sampler = Langevin}$ \\
$\rho = 1$
\end{tabular}
& \scriptsize $\eta_0 = 2 \times 10^{-5}$ 
& \scriptsize $\eta_0 = 3 \times 10^{-5}$ 
& \scriptsize $\eta_0 = 2 \times 10^{-5}$ 
& \scriptsize $\eta_0 = 9 \times 10^{-6}$ 
& \scriptsize $\eta_0 = 2 \times 10^{-5}$ \\

\midrule
% \rowcolor{rebuttalColor!30}
 \ddnm & 50 & $\eta=0.85$ & \scriptsize --- & \scriptsize --- & \scriptsize --- & \scriptsize --- & \scriptsize --- \\

\midrule
% \rowcolor{rebuttalColor!30}
 \diffpir & 50 & \scriptsize
\begin{tabular}[c]{@{}l@{}} 
$\lambda = 1$ \\
$\zeta= 0.3$ \\
\end{tabular}
& \scriptsize --- & \scriptsize --- & \scriptsize --- & \scriptsize --- & \scriptsize --- \\

\midrule
\flowchef & 50 & \scriptsize
\begin{tabular}[c]{@{}l@{}} 
$\texttt{step size} = 0.9$ \\
$\texttt{grad\_descent\_steps} = 10$
\end{tabular}
& \scriptsize --- & \scriptsize --- & \scriptsize --- & \scriptsize --- & \scriptsize --- \\
\midrule
\flowdps & 50 & \scriptsize
\begin{tabular}[c]{@{}l@{}} 
$\texttt{grad\_descent\_steps} = 3$
\end{tabular}
& \scriptsize $\texttt{step\_size} = 20$ 
& \scriptsize $\texttt{step\_size} = 10$ 
& \scriptsize $\texttt{step\_size} = 10$ 
& \scriptsize $\texttt{step\_size} = 10$ 
& \scriptsize $\texttt{step\_size} = 10$ \\
\midrule
\pnpflow & 50 & \scriptsize
\begin{tabular}[c]{@{}l@{}} 
$\alpha = 1.0$ \\
$\texttt{lr style} = \texttt{constant}$
\end{tabular}
& \scriptsize $\gamma_n = 0.8$ 
& \scriptsize $\gamma_n = 1.3$ 
& \scriptsize $\gamma_n = 1.4$ 
& \scriptsize $\gamma_n = 0.8$ 
& \scriptsize $\gamma_n = 0.8$ \\
\midrule
\psld & 50 & \scriptsize
\begin{tabular}[c]{@{}l@{}} 
$\texttt{DDIM\_param} = 1.0$
\end{tabular}
& \scriptsize 
\begin{tabular}[c]{@{}l@{}} 
$\gamma = 0.01$ \\
$\eta = 0.01$
\end{tabular}
& \scriptsize 
\begin{tabular}[c]{@{}l@{}} 
$\gamma = 0.01$ \\
$\eta = 0.01$
\end{tabular}
& \scriptsize 
\begin{tabular}[c]{@{}l@{}} 
$\gamma = 0.01$ \\
$\eta = 0.01$
\end{tabular}
& \scriptsize 
\begin{tabular}[c]{@{}l@{}} 
$\gamma = 0.05$ \\
$\eta = 0.1$
\end{tabular}
& \scriptsize 
\begin{tabular}[c]{@{}l@{}} 
$\gamma = 0.1$ \\
$\eta = 0.5$
\end{tabular} \\
\midrule
\reddiff & 50 & \scriptsize
\begin{tabular}[c]{@{}l@{}} 
$\texttt{lr} = 0.2$ \\
$\texttt{grad\_term\_weight} = 0.25$ \\
$\texttt{obs\_weight} = 1.0$
\end{tabular}
& \scriptsize --- & \scriptsize --- & \scriptsize --- & \scriptsize --- & \scriptsize --- \\

\midrule
\resample & 50 & \scriptsize
\begin{tabular}[c]{@{}l@{}} 
$C = 5$ \\
$\texttt{grad\_descent\_steps} = 200$ \\
$\gamma_{\text{scale}} = 40.0$ \\
$\texttt{lr}_{\text{pixel}} = 10^{-2}$ \\
$\texttt{lr}_{\text{latent}} = 5 \times 10^{-3}$
\end{tabular}
& \scriptsize --- & \scriptsize --- & \scriptsize --- & \scriptsize --- & \scriptsize --- \\

\midrule
\rowcolor{deepcarrotorange!60} \algo\ (ours) & 25 & \scriptsize $\eta = \sigma_s (1 - \alpha_s)$ 
& \scriptsize --- & \scriptsize --- & \scriptsize --- & \scriptsize --- & \scriptsize --- \\

\bottomrule
\end{tabular}
}
\label{tab:hyperparams-tasks}
\end{table}

\section{Examples of reconstructions}
\label{sec:reconstructions}
Here, we provide a side-by-side comparison of the \algo\ and the considered baselines on image editing tasks via inpainting on \PieBench.
The red semi-transparent layer in the first column shows the masked region to be edited and the text in the left-hand side of each row represents the editing prompt.

In the follow work \citet{ghorbel2026ding-editor}, we provide extended experiments that include other datasets, models, as well as the editing task on videos.

\vspace*{5mm}
\emph{(See the next pages for the gallery of examples)}
\vspace*{5mm}

\begin{figure}[h]
    \centering
    \resizebox{1.0\textwidth}{!}{
    \begin{tabular}{@{}c@{\hspace{0.03\textwidth}}c@{}}
        \includegraphics{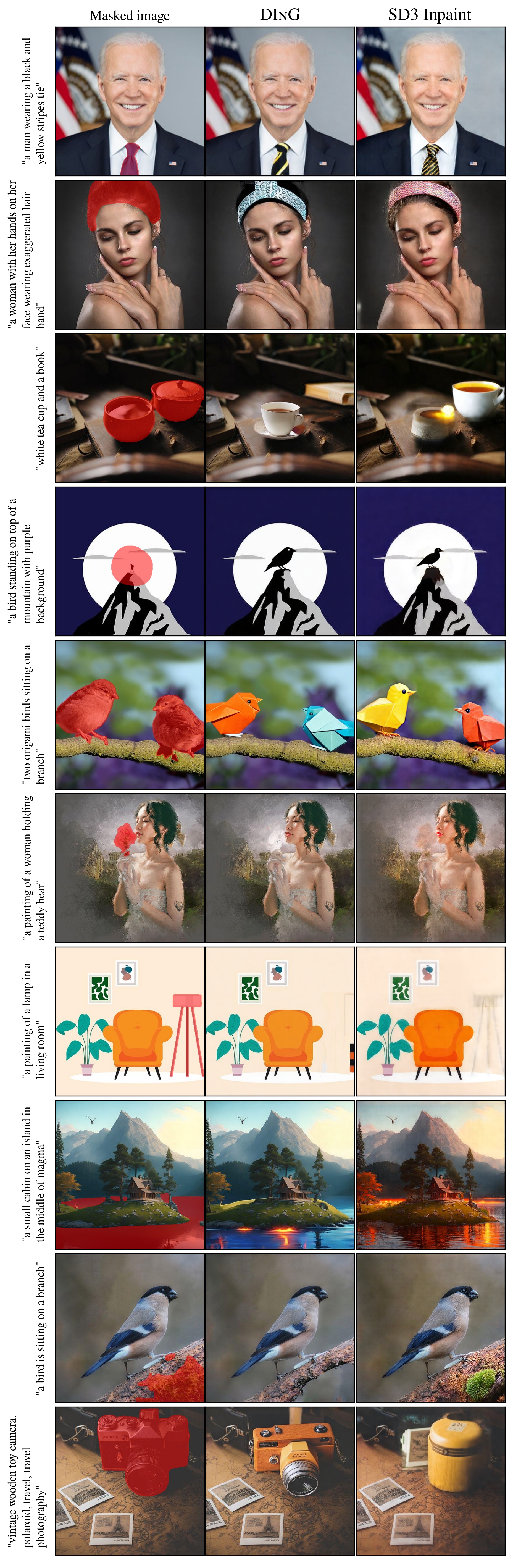} &
        \includegraphics{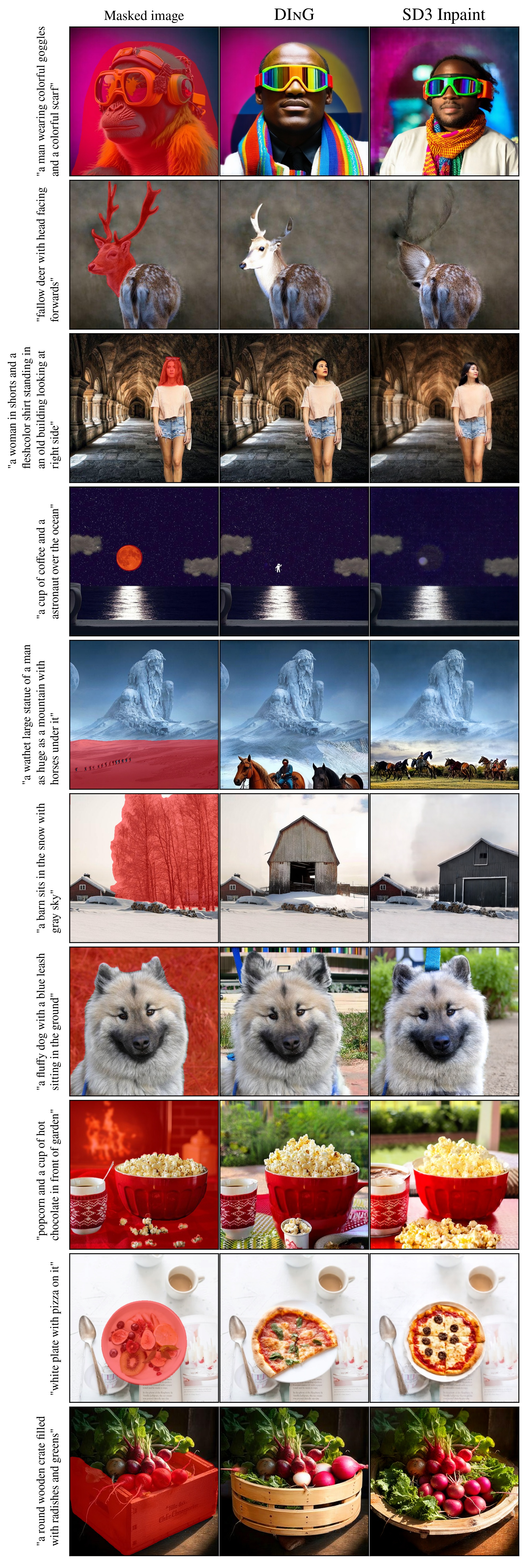} \\
    \end{tabular}
    }
    \caption{Comparison of \algo\ and finetuned SD3 on \PieBench. Both methods have the same runtime of $2.2$s.}
    \label{fig:sd3ft}
\end{figure}

\begin{figure} 
    \centering 
    \includegraphics[width=\textwidth]{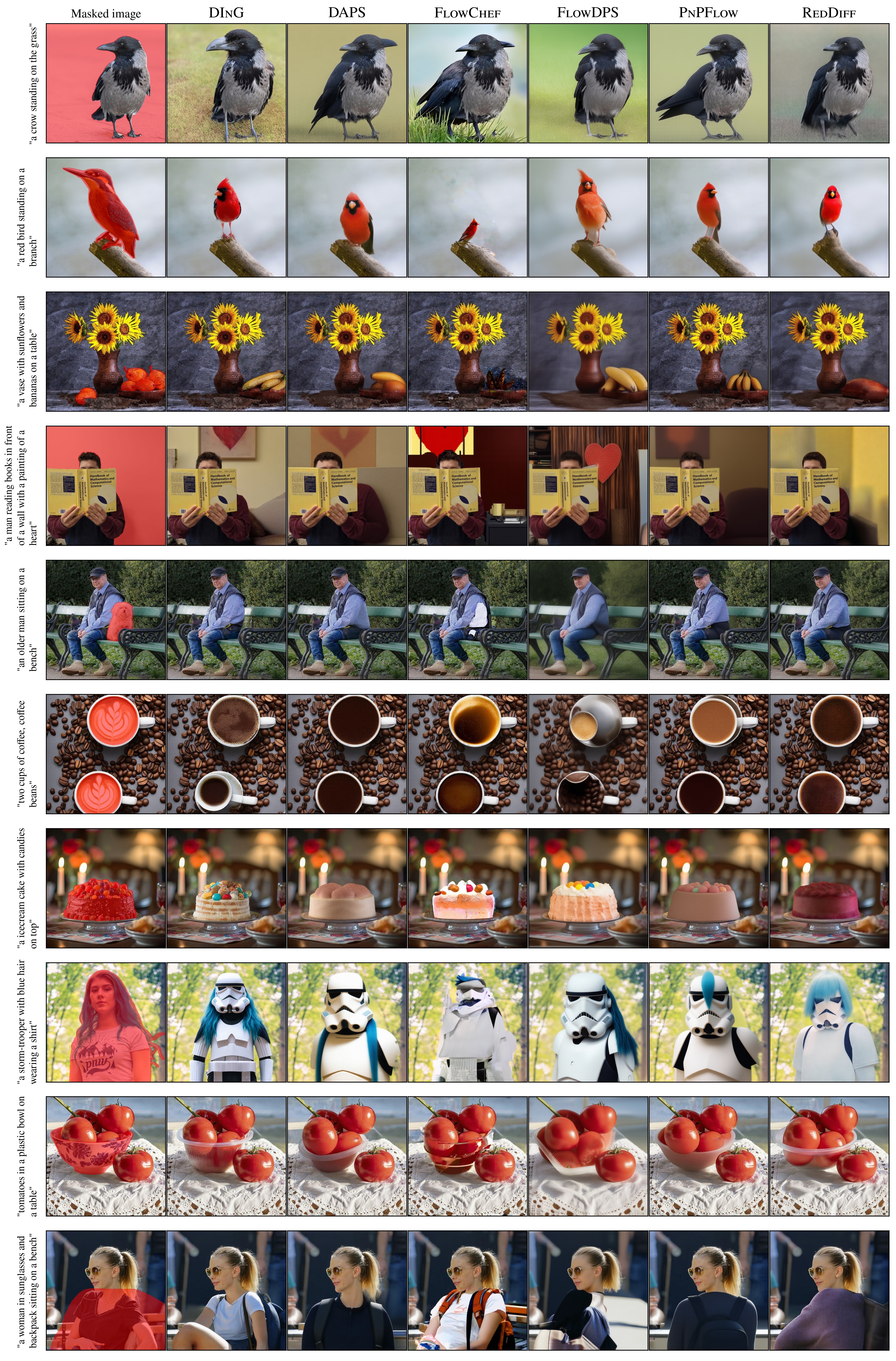}
    \caption{Comparison of \algo\ and zero-shot baselines on \PieBench. All methods use 50 NFEs.}
\end{figure}

\begin{figure} 
    \centering 
    \includegraphics[width=\textwidth]{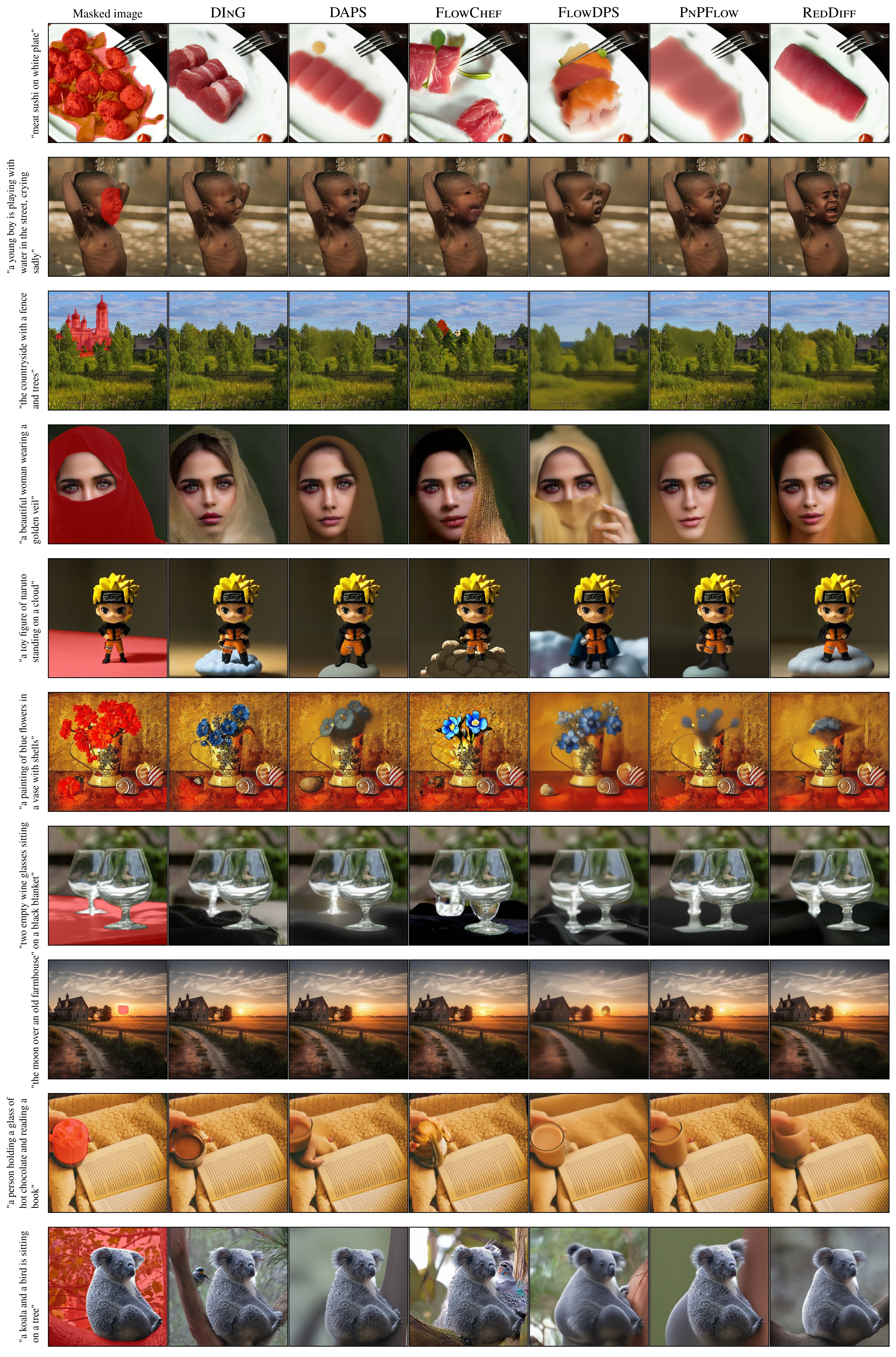}
    \caption{Comparison of \algo\ and zero-shot baselines on \PieBench. All methods use 50 NFEs.}
\end{figure}

\begin{figure} 
    \centering 
    \includegraphics[width=\textwidth]{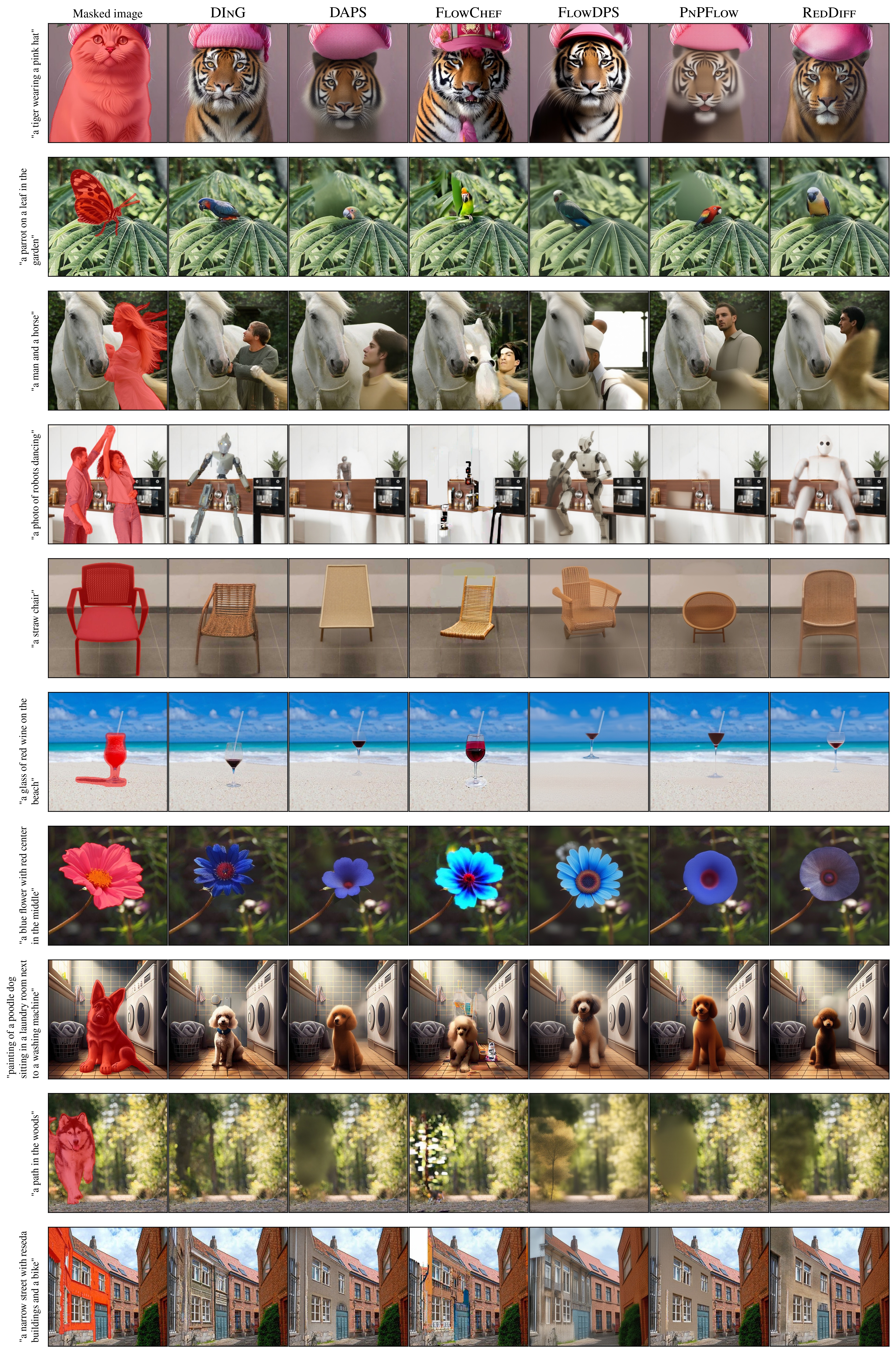}
    \caption{Comparison of \algo\ and zero-shot baselines on \PieBench. All methods use 50 NFEs.}
\end{figure}

\begin{figure} 
    \centering 
    \includegraphics[width=\textwidth]{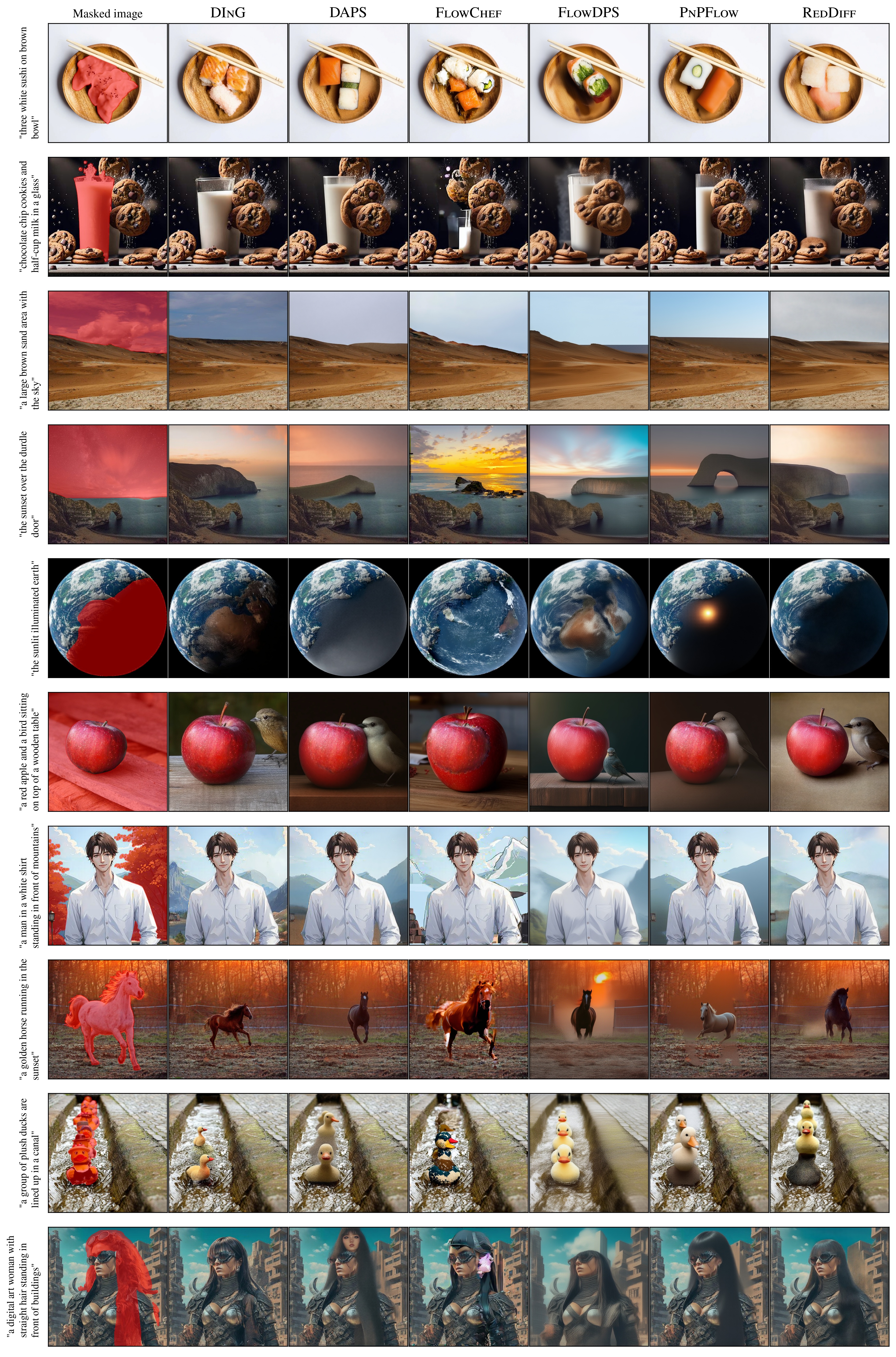}
    \caption{Comparison of \algo\ and zero-shot baselines on \PieBench. All methods use 50 NFEs.}
\end{figure}

\end{document}